\documentclass{article}
\PassOptionsToPackage{numbers, compress}{natbib}
\usepackage[preprint]{neurips_2026}

\usepackage[utf8]{inputenc} % allow utf-8 input
\usepackage[T1]{fontenc} % use 8-bit T1 fonts
\usepackage{hyperref} % hyperlinks
\usepackage{url} % simple URL typesetting
\usepackage{times}
\usepackage{booktabs, multirow} % tables
\usepackage{graphicx, subcaption, wrapfig} % figures
\usepackage{algorithm, algorithmic}
\usepackage{latexsym, amsmath, amsthm, amssymb, amsfonts, bm} % amsthm: proof, amsfonts: mathbb, bm: bold.
\usepackage{nicefrac} % compact symbols for 1/2, etc.
\usepackage{tcolorbox}
\usepackage{xcolor} % colors
\usepackage{enumitem}
\usepackage{caption}
\usepackage{cleveref} % for reference
\usepackage{microtype} % microtypography
\usepackage{comment}

\setlist[enumerate]{itemsep=0mm}
\newtheorem{definition}{Definition}
\newtheorem{lemma}{Lemma}
\newtheorem{theorem}{Theorem}
\newcommand{\dnns}[0]{\textsc{dnn}s}
\newcommand{\qnn}[0]{\textsc{qnn}}
\newcommand{\qnns}[0]{\textsc{qnn}s}
\newcommand{\vqcs}[0]{\textsc{vqc}s}

\title{SLT: Robust Quantum Neural Networks for Noisy-Label Medical Image Classification via Supermartingale-based Label Transition}
\author{Jun Zhuang\textsuperscript{1} \quad
Mohammad Al Hasan\textsuperscript{2} \quad
Yiyu Shi\textsuperscript{3} \quad
Chaowen Guan\textsuperscript{4}\\
\textsuperscript{1}Boise State University, ID 83725 \quad
\textsuperscript{2}Indiana University Indianapolis, IN 46202 \quad\\
\textsuperscript{3}University of Notre Dame, IN 46556 \quad
\textsuperscript{4}University of Cincinnati, OH 45221\\
\textsuperscript{1}\texttt{junzhuang@boisestate.edu} \quad
\textsuperscript{2}\texttt{alhasan@iu.edu} \quad\\
\textsuperscript{3}\texttt{yshi4@nd.edu} \quad
\textsuperscript{4}\texttt{guance@ucmail.uc.edu}
}

\begin{document}
\maketitle

\begin{abstract}
Noisy-label learning in small-scale medical image classification remains challenging, as limited data can amplify annotation errors. While quantum neural networks (\qnns) have recently been explored as compact models for small-data regimes, their behavior under label corruption remains insufficiently understood.
A key obstacle is \qnns' intrinsic ``natural smoothness'', which may regularize training but also obscure high-confidence samples needed for noise-transition estimation. We propose Supermartingale-based Label Transition (SLT), an anchor-free loss correction framework for robust \qnn-based medical image classification under noisy labels. SLT models entropy reduction in predictive distributions as a supermartingale and uses its monotonic behavior to identify stable transition-matrix refinement steps. This enables dynamic transition updates while reducing noise-driven oscillations during \qnn\ training. We further provide a convergence analysis showing that the proposed transition-refinement process reaches a steady state. Experiments on multiple public small-scale medical image datasets demonstrate that SLT improves \qnn-based noisy-label classification and outperforms classic noise-label learning baselines under synthetic and real-world label noise.
\end{abstract}

\section{Introduction}
\label{sec:intro}
Noisy-label learning in small-scale medical image classification remains a central challenge for reliable medical applications due to costly expert annotations~\cite{shi2024survey}. Classical deep neural networks (\dnns) have achieved remarkable success when trained on large-scale datasets with high-quality annotations~\cite{hestness2017deep}; however, their advantages often diminish when only limited training samples with corrupted labels are available~\cite{song2022learning}. This motivates alternative modeling paradigms suited to limited-data regimes. Recent studies suggest that quantum neural networks (\qnns) may offer potential benefits in small-data scenarios or tasks involving complex, high-dimensional feature spaces~\cite{havlivcek2019supervised, maqsudur2025nqnn, oviesi2025quantum}. Nevertheless, applying \qnns\ to noisy-label medical image classification remains under-explored.
A key challenge arises from the intrinsic ``natural smoothness'' of \qnns, which is rooted in the Born rule~\cite{born1926quantenmechanik}. Unlike classical \dnns, whose predictive distributions can rapidly become sharply peaked, \qnns\ tend to produce smoother probability distributions through quantum measurements. This property can serve as an implicit regularizer in limited-data settings, but it also makes noisy-label learning more difficult: overly smooth predictions may obscure reliable class-conditional patterns and hinder the identification of high-confidence samples for noise-transition estimation. Thus, a central question remains unresolved: \textit{can the smooth predictive behavior of \qnns\ be leveraged, rather than merely treated as a limitation, to improve robustness under noisy labels}?

To address noisy labeling issues, a line of research in classical machine learning has explored loss correction methods, which correct the training objective using a transition matrix that models the mapping between latent clean labels and observed noisy labels~\cite{patrini2017making}. The effectiveness of these methods, however, heavily depends on accurate transition matrix estimation~\cite{northcutt2021confident, kye2022learning, zhu2022beyond, liu2023identifiability}. A widely adopted strategy estimates the transition matrix using anchor points, i.e., instances whose posterior distribution is nearly one-hot for a specific class, such that the class-conditional noise distribution can be directly inferred~\cite{liu2015classification, patrini2017making}. Unfortunately, true anchor points are often unreliable or even unavailable in small-scale and severe label corruption medical datasets~\cite{karimi2020deep, ju2022improving}.
To mitigate this limitation, anchor-free methods have attempted to estimate the transition matrix without relying on explicit anchor points~\cite{yao2020dual, li2021provably, zhu2021clusterability, cheng2022class}. One representative strategy is to select high-confidence instances, also known as weak anchor points, whose predictive probabilities have low entropy and can serve as proxies for transition estimation~\cite{xia2019anchor, zhang2021learning}. While effective in some settings, such entropy-based selection can be fragile for classical \dnns\ due to overconfidence~\cite{wei2022mitigating}, where mislabeled samples may be selected as weak anchors, leading to biased transition estimates. In contrast, the ``natural smoothness'' of \qnns\ may reduce premature overconfidence, suggesting a promising opportunity for more stable weak-anchor selection. However, this opportunity requires a principled mechanism to determine when the transition matrix should be updated during training.

\begin{wrapfigure}{r}{0.45\textwidth}
\vspace{-6mm}
%\begin{figure}[t]
  \centering
  \begin{subfigure}[t]{0.32\linewidth}
    \centering
    \caption{No Noise}
    \includegraphics[width=\linewidth]{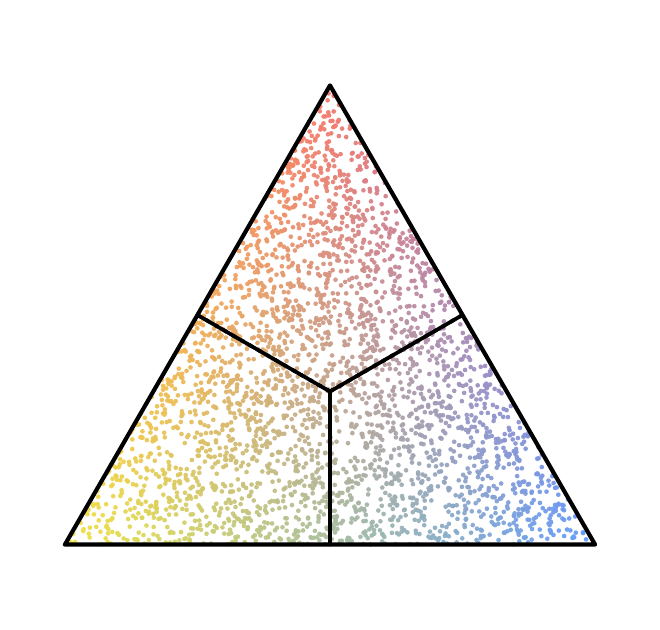}
  \end{subfigure}%\vspace{-2mm}
  \begin{subfigure}[t]{0.32\linewidth}
    \centering
    \caption{Symmetric}
    \includegraphics[width=\linewidth]{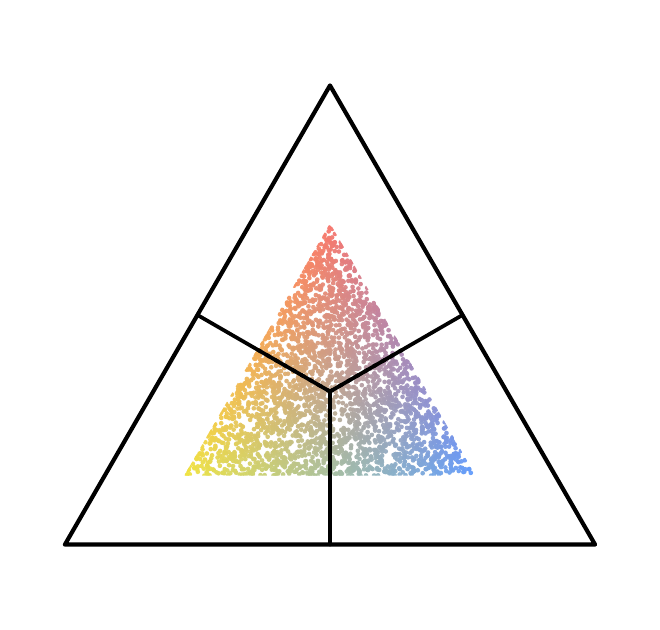}
  \end{subfigure}%\vspace{-2mm}
  \begin{subfigure}[t]{0.32\linewidth}
    \centering
    \caption{Asymmetric}
    \includegraphics[width=\linewidth]{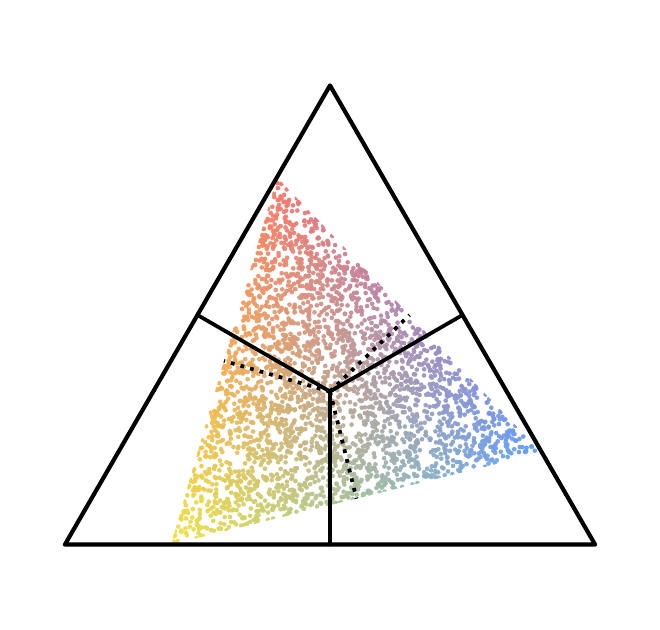}
  \end{subfigure}\vspace{-3mm}
  \caption{Example probability simplex under different label-noise settings. To construct noisy labels, we flip each clean label uniformly to any other class in the symmetric case (UN), or to a fixed alternative class in a cyclic/custom manner for asymmetric cases, including cyclic-flipping (CF) and custom-mapping (CM) noise.}
  \label{fig:simplex}
%\end{figure}
\vspace{-2mm}
\end{wrapfigure}
Inspired by this observation, we propose to model the entropy-reduction process of predictive distributions as a supermartingale. Rather than updating the transition matrix whenever the model becomes locally confident, our approach uses the monotonic behavior of a supermartingale to identify stable refinement steps at which predictions exhibit progressively lower entropy. This allows the transition matrix to be updated dynamically while reducing noise-driven oscillations during \qnn\ training. Furthermore, we theoretically prove that the proposed supermartingale-based transition process converges to a steady state, providing a stability criterion for label-transition refinement. Building on this theoretical insight, we introduce a Supermartingale-based Label Transition (SLT) framework that adaptively exploits high-confidence predictions to refine the transition matrix without relying on hard-to-find anchor points. Importantly, SLT turns the smooth predictive behavior of \qnns\ from a potential limitation into a stabilizing inductive bias: it delays unreliable transition updates, mitigates premature overconfidence in weak-anchor selection, and improves robustness under noisy annotations.
Extensive experiments across multiple public small-scale medical image datasets demonstrate that our framework improves \qnn-based classification and outperforms classic noise-label learning baselines under various synthetic and real-world noise settings, where we illustrate symmetric and asymmetric noise as an example in Fig.~\ref{fig:simplex}.
In brief, our primary contribution can be summarized as:
\begin{itemize}
    \item We propose SLT, a new anchor-free loss correction framework for \qnns\ that adaptively exploits high-confidence predictions to refine the noise transition matrix without relying on hard-to-find anchor points, making it suitable for small-scale medical image datasets with noisy annotations.
    \item We formalize the entropy-reduction process of predictive probabilities as a \emph{supermartingale} and theoretically analyze the convergence of the proposed transition-refinement process, providing a stability-aware foundation for label-noise correction.
    \item Extensive experiments on multiple public small-scale medical image datasets demonstrate that SLT improves \qnn-based classification under various label noise settings, while outperforming several representative baselines.
\end{itemize}

\section{Preliminaries}
\label{sec:pre}
In this section, we first introduce the preliminary background about the basics of quantum computing and barren plateaus, and then present the necessary tools from probability theory.

\noindent \textbf{Learning with Noisy Labels in \qnns.}
In this work, we aim to train \qnns\ $f_{\bm{\theta}}(\cdot)$ parameterized by $\bm{\theta}$ on a noisily annotated dataset $\{ \mathbf{X}, \mathbf{\tilde{Y}} \}$, where $\mathbf{X}\!\in\!\mathbb{R}^{|X|\times d}$ denotes the input matrix of $|X|$ instances with $d$-dimensional features, and $\mathbf{\tilde{Y}}\!\in\!\mathbb{R}^{|X| \times 1}$ denotes the observed {\bf noisy labels}, which may deviate from the latent {\bf ground-truth labels} $\mathbf{Y}\!\in\!\mathbb{R}^{|X| \times 1}$ (unobserved). Generally, \qnns\ are built by wrapping classical neural-network layers (e.g., fully connected layers) with Variational Quantum Circuits (\vqcs). Typical \vqcs\ consist of a finite sequence of unitary blocks $U(\bm{\theta})=\prod_{l=1}^{L} U_l(\bm{\theta}_l)$, where $\bm{\theta}_l\!\in\!\mathbb{R}^{NR}$ collects the parameters (e.g., $R$ rotation angles per qubit across $N$ qubits) at layer $l$, and the full parameter matrix is $\bm{\theta}=(\bm{\theta}_1,\ldots,\bm{\theta}_L)\!\in\!\mathbb{R}^{LNR}$. Each block can be written as $U_l(\bm{\theta}_l)=\exp(-i\,\bm{\theta}_l^\top \bm{V}_l)$ with a vector of Hermitian generators $\bm{V}_l$; or, in the single-parameter case, $U_l(\theta_l)=e^{-i\theta_l V_l}$ with Hermitian $V_l$.

To optimize the circuit, a standard variational objective $E(\bm{\theta})$ is the expectation of a Hermitian operator $H$ under the variational state $U(\bm{\theta})|0\rangle$:
\begin{equation}
    E(\bm{\theta}) = \langle 0|\, U(\bm{\theta})^{\dagger}\, H\, U(\bm{\theta}) \,|0\rangle.
\label{eqn:loss_fn}
\end{equation}
For $K$-class closed-set classification, we empirically compare the predictive probabilities $\bm{p}(\hat{Y}\!\mid\!X)\!\in\!\mathbb{R}^{|X| \times K}$, which is the measurement statistics of $U(\bm{\theta})$, with the one-hot encoded noisy labels $\mathbf{\tilde{Y}}\!\in\!\{0,1\}^{|X|\times K}$ using cross-entropy loss:
\begin{equation}
    \mathcal{L}_{\mathrm{CE}}(\bm{\theta}) 
    = -\frac{1}{|X|} \sum_{i=1}^{|X|} \sum_{k=1}^{K} \tilde{y}_{i,k}\, \log p_{i,k},
\label{eqn:ce}
\end{equation}
where $p_{i,k} = p(\hat{Y}=k \mid X=x_i;\bm{\theta})$ denotes the predictive probability for the $i$-th instance $x_i$ and class $k$, generated by the model parameterized by $\bm{\theta}$.

This formulation compares the predictive distribution with the observed noisy one-hot labels and is the standard objective for supervised classification with noisy annotations.

\noindent \textbf{Loss Correction with Class-conditional Noise.}
Training \qnns\ with noisy labels inevitably risks fitting the corrupted labels rather than the latent true labels. To mitigate this issue, \citet{patrini2017making} introduced a \emph{forward loss correction} approach that corrects the predictive probabilities $\bm{p}(\hat{Y}\mid X)$ through $\bm{p}(\tilde{Y}\mid X) = \bm{p}(\hat{Y}\mid X)\mathbf{T}$, where $\mathbf{T}\!\in\![0,1]^{K \times K}$ is row-stochastic with entries $\mathbf{T}_{jk} = p(\tilde{Y}=k \mid \hat{Y}=j)$ for all $j,k \in \{1,\ldots,K\}$. Thus, each row of $\mathbf{T}$ describes the distribution of observed noisy labels conditioned on a predicted latent class. Due to the unobservability of latent true labels, such an approach estimates $\mathbf{T}$ using predictive probabilities $\bm{p}(\hat{Y}\mid X)\!\in\!\Delta^{K-1}$ as a proxy for true probabilities $\bm{p}(Y \mid X)$, where probability simplex $\Delta^{K-1} = \{ \bm{p}\!\in\!\mathbb{R}^K \,|\, p_i \ge 0,\; \sum_{i=1}^K p_i = 1 \}$. Therefore, the effectiveness of loss correction depends critically on the accuracy of $\mathbf{T}$. A common strategy constructs $\mathbf{T}$ using \emph{anchors}, defined as:
\begin{definition}[Anchor point]
An instance $x^{(j)}$ is an \emph{anchor point} for class $j$ if $p(\hat{Y}=j \mid X=x^{(j)})=1$. 
When an anchor point $x^{(j)}$ exists for every class $j$, the $j$-th row of the noise transition matrix satisfies $\mathbf{T}_{j\cdot}=p(\tilde{Y}=\cdot \mid \hat{Y}=j)$, where $\mathbf{T}_{j\cdot}\!\in\!\Delta^{K-1}$ is a probability vector with entries $\mathbf{T}_{jk}=p(\tilde{Y}=k \mid \hat{Y}=j)$. 
\label{def:anchor}
\end{definition}
The anchor concept is a common practice in prior loss-correction methods used for estimating $\mathbf{T}$; our method does not require anchors.

\noindent \textbf{Tools from Probability Theory.} Below, we review the definition of martingale (supermartingale), along with key tools relevant to our work. %We adapt the descriptions from \cite{williams1991probability}.

\begin{definition}[Martingale, \cite{williams1991probability}]
  Let $\{M^{(t)}\}_{t\geq 1}$ be a stochastic process w.r.t. a filtration $\{\mathcal{F}^{(t)}\}_{t\geq 1}$ on a probability space $(\Omega, \mathcal{F}, P)$. The process $\{M^{(t)}\}$ is called a \emph{martingale} if (i) $\{M^{(t)}\}$ is adapted, (ii) $\mathbb{E}[|M^{(t)}|] < \infty$, for $\forall\ t \in \mathbb{Z}^+$, (iii) $\mathbb{E}[M^{(t+1)} \mid \mathcal{F}^{(t)}] = M^{(t)}$, almost surely for $\forall\ t \in \mathbb{Z}^+$.
  
  If (iii) is replaced by $\mathbb{E}[M^{(t+1)} \mid \mathcal{F}^{(t)}] \leq M^{(t)}$ almost surely, we say $\{M^{(t)}\}$ is a \emph{supermartingale}.
\label{def:martingale}
\end{definition}

\begin{theorem}[Doob's Forward Convergence Theorem, \cite{williams1991probability}]
  Let $\{M^{(t)}\}_{t\geq 1}$ be an $L^1$-bounded \emph{supermartingale} (in Def.~\ref{def:martingale}). Then, almost surely, the limit $M^{(\infty)} = \lim_{t \to \infty}M^{(t)}$ exists and is finite.
\label{thm:fct}
\end{theorem}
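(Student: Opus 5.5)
The plan is the classical route from Williams: Doob's upcrossing inequality, combined with the $L^1$ bound, gives almost-sure convergence. Fix a supermartingale $\{M^{(t)}\}$ with $\sup_t \mathbb{E}[|M^{(t)}|] =: K_0 < \infty$.

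\textbf{Step 1: build a gambling strategy.} For rationals $a<b$, define a predictable process $C^{(t)}\in\{0,1\}$: it switches on when $M$ drops below $a$ and off when $M$ next exceeds $b$. The discrete stochastic integral is
\[
(C\bullet M)^{(t)}=\sum_{s\le t} C^{(s)}\bigl(M^{(s)}-M^{(s-1)}\bigr).
\]
Since $C$ is bounded, nonnegative and predictable (it is $\mathcal{F}^{(s-1)}$-measurable), $C\bullet M$ is again a supermartingale. This follows directly from condition (iii) of Def.~\ref{def:martingale}, so $\mathbb{E}[(C\bullet M)^{(t)}]\le 0$.

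\textbf{Step 2: the upcrossing inequality.} Let $U_t[a,b]$ be the number of upcrossings of $[a,b]$ completed by time $t$. Each completed upcrossing contributes at least $b-a$ to $C\bullet M$, and the final incomplete one loses at most $(M^{(t)}-a)^-$. Hence
\[
(C\bullet M)^{(t)}\ge (b-a)\,U_t[a,b]-(M^{(t)}-a)^-.
\]
Taking expectations gives
\[
(b-a)\,\mathbb{E}[U_t[a,b]]\le \mathbb{E}[(M^{(t)}-a)^-]\le K_0+|a|.
\]
Letting $t\to\infty$ by monotone convergence, $\mathbb{E}[U_\infty[a,b]]<\infty$, so $U_\infty[a,b]<\infty$ almost surely. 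I expect this step to be the main obstacle: the bookkeeping of the pathwise inequality must correctly account for the incomplete last upcrossing.

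\textbf{Step 3: conclude.} The event $\{\liminf M^{(t)}<\limsup M^{(t)}\}$ is the countable union over rationals $a<b$ of the events $\{\liminf<a<b<\limsup\}$. Each of these lies inside $\{U_\infty[a,b]=\infty\}$, which is null by Step 2. So the limit $M^{(\infty)}$ exists in $[-\infty,\infty]$ almost surely. Finally, Fatou's lemma gives
\[
\mathbb{E}[|M^{(\infty)}|]\le\liminf_t \mathbb{E}[|M^{(t)}|]\le K_0<\infty,
\]
so $M^{(\infty)}$ is finite almost surely, which completes the argument.
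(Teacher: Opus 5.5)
Your proposal is correct and follows the standard argument from Williams: the predictable "buy below $a$, sell above $b$" strategy, the pathwise bound $(C\bullet M)^{(t)}\ge (b-a)U_t[a,b]-(M^{(t)}-a)^-$, the estimate $(b-a)\mathbb{E}[U_\infty[a,b]]\le K_0+|a|$, the countable union over rational pairs, and Fatou's lemma for finiteness. This is the proof the paper relies on, since it states the theorem only by citation to Williams and gives no proof of its own.
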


\section{Our Proposed Framework}
{\bf Our key idea} is to leverage high-confidence predictions to adaptively refine the transition matrix with guaranteed convergence, without relying on hard-to-find anchor points. Prior works, such as T-revision~\cite{xia2019anchor}, empirically validate that estimating the transition matrix using instances with high predictive confidence can effectively improve the performance of loss correction. This {\bf intuition} is natural: as predictive entropy decreases, the certainty of predictions increases, making them more reliable for guiding correction in training. Building on this intuition, we propose a new supermartingale-based label transition (SLT) framework that models the entropy reduction process of predictive probabilities as a supermartingale. To formulate the supermartingale process, we monitor the normalized predictive entropy ($s$), across $|X|$ instances and $K$ classes:
\begin{equation}
s[\bm{p}(\hat{Y}|X)] = \frac{-1}{|X| \log K} \sum_{i=1}^{|X|} \sum_{k=1}^K p_{i,k} \log p_{i,k}.
\label{eqn:entropy}
\end{equation}

This signal provides a global measure of model certainty: lower entropy corresponds to more confident predictions. Whenever the current $s$ falls below its historical minimum $S$, we update the transition matrix accordingly. By modeling the trajectory of historical minimum $S$ as a supermartingale, SLT leverages the monotonic accumulation of increasingly confident predictions to refine the transition matrix in an adaptive manner. The {\bf key advantage} of this perspective lies in the favorable properties of supermartingales: \emph{monotonicity and guaranteed convergence}. We further provide a theoretical analysis showing that the noisy transitions induced by our supermartingale formulation converge to a steady state, thereby ensuring stable loss correction.

\begin{figure}[h]
\centering
\begin{minipage}[t]{0.5\textwidth}
    \centering
    \vspace{0pt}
    \includegraphics[width=\linewidth]{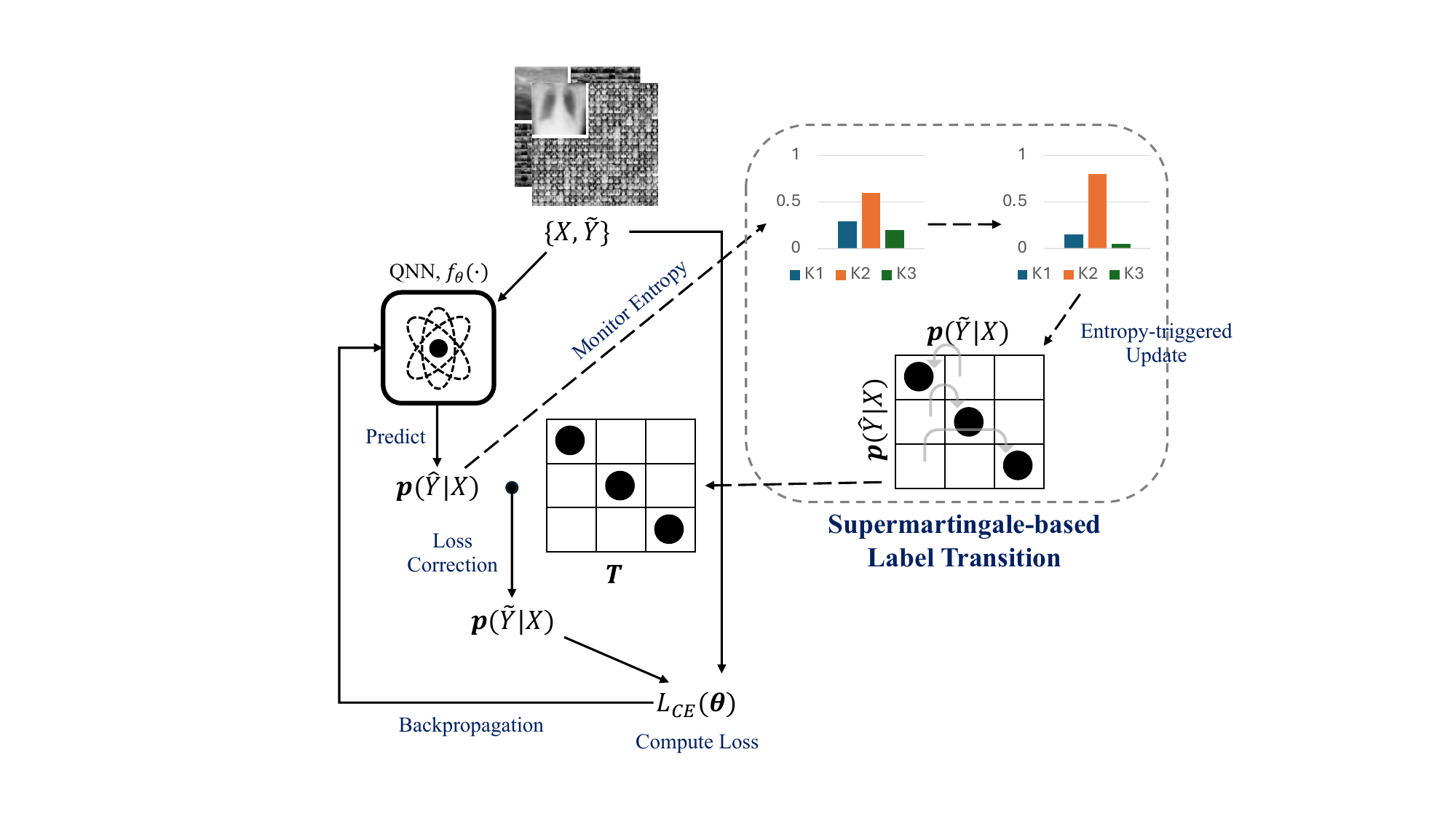}
    \caption{Overall workflow of our proposed framework, SLT. Solid arrows denote the main training process, and dashed arrows indicate the entropy-driven transition-matrix refinement.}
    \label{fig:framework}
\end{minipage}
\hfill
\begin{minipage}[t]{0.46\textwidth}
    \vspace{0pt}
    \begin{algorithm}[H]
    \small
    \caption{Core steps of SLT algorithm.}
    \label{algo:slt}
    \begin{algorithmic}[1]
    \REQUIRE Noisy dataset $\{\mathbf{X}, \mathbf{\tilde{Y}}\}$, \qnn\ $f_{\bm{\theta}}$, the number of training epoch $T$.
    \STATE Initialize transition matrix $\mathbf{T}$ and score $S^{(0)}$;
    \FOR{$t = 1$ to $T$}
        \STATE $\bm{p}(\hat{Y}|X) \leftarrow \mathrm{Softmax}(f_{\bm{\theta}}(\mathbf{X}))$;
        \STATE $\bm{p}(\tilde{Y}|X) \leftarrow \bm{p}(\hat{Y}|X)\mathbf{T}^{(t)}$;
        \STATE Update $\bm{\theta}$ by minimizing $\mathcal{L}_{\mathrm{CE}}$;
        \STATE Compute entropy score $s^{(t)}$ by Eq.~(\ref{eqn:entropy});
        \IF{$s^{(t)} < S^{(t-1)}$}
            \STATE Update $\mathbf{T}^{(t)}$ by Eqs.~(\ref{eqn:count})--(\ref{eqn:ma});
            \STATE $S^{(t)} \leftarrow s^{(t)}$;
        \ENDIF
    \ENDFOR
    \STATE \textbf{return} $\mathbf{T}, f_{\bm{\theta}}$.
    \end{algorithmic}
    \end{algorithm}
\end{minipage}
\vspace{-5pt}
\end{figure}
We present the workflow of our method in Fig.~\ref{fig:framework} and provide the algorithmic details in Algo.~\ref{algo:slt}. 
Given a noisy labeled dataset $\{\mathbf{X}, \mathbf{\tilde{Y}}\}$ and a \qnn\ $f_{\bm{\theta}}(\cdot)$, we first initialize a warm-up transition matrix $\mathbf{T}$ (we provide implementation details in {\bf Appendix~\ref{app:exp}}) ({\bf line~1}). After initialization, we perform $T$ training iterations for loss correction ({\bf line~2}). In each iteration $t$, we multiply (dot product) the predictive probabilities $\bm{p}(\hat{Y}\!\mid\!X)\!\in\!\mathbb{R}^{|X| \times K}$, which is generated by the \qnn\ ({\bf line~3}), with a transition matrix $\mathbf{T}^{(t)}\!\in\!\mathbb{R}^{K \times K}$ that models the noisy transition patterns between noisy labels and latent true labels ({\bf line~4}). After obtaining the corrected predictive probabilities $\bm{p}(\tilde{Y}\!\mid\!X)$, we compute the cross-entropy loss between the corrected predictions and noisy labels by Eq.(~\ref{eqn:ce}), and then update the model parameters via a gradient step with learning rate $\alpha$ ({\bf line~5}). To monitor the entropy reduction process, we calculate the normalized predictive entropy $s^{(t)}$ by Eq.~(\ref{eqn:entropy}) ({\bf line~6}). If $s^{(t)}$ is smaller than its historical minimum $S^{(t-1)}$ from previous iterations ({\bf line~7}), indicating that the predictions have become more confident, we update the transition matrix $\mathbf{T}^{(t)}$ ({\bf line~8}) and the historical minimum entropy $S^{(t)}$ ({\bf line~9}). Finally, we output the refined transition matrix $\mathbf{T}$ together with the corrected \qnn\ ({\bf line~12}).
We further analyze time and space complexity in \textbf{Appendix~\ref{app:proof}}.

\noindent \textbf{Dynamic refinement of the transition matrix.}
To refine the transition matrix $\mathbf{T}$ adaptively, at each step, we construct the dynamic transition matrix $\mathbf{T}'$ based on the current predicted labels (with lowest entropy) and the noisy labels $\tilde{Y}$. For each class pair $(j,k)$, the entry $\mathbf{T}'[j,k]$ is defined as the normalized co-occurrence between predicted labels $\hat{y}=j$ and noisy labels $\tilde{y}=k$:
\begin{equation}
    \mathbf{T}'[j,k] = \frac{\sum_{i=1}^{|X|} \mathbf{1}\{\hat{y}_i=j,\,\tilde{y}_i=k\}}{\sum_{i=1}^{|X|} \mathbf{1}\{\hat{y}_i=j\}},
\label{eqn:count}
\end{equation}
where $\mathbf{1}({\cdot})$ denotes an indicator function. Intuitively, $\mathbf{T}'[j,k]$ estimates the probability that an instance predicted as class $j$ is labeled as noisy class $k$.

To mitigate fluctuations due to frequent refinement, we update the global transition matrix $\mathbf{T}$ in a moving average manner as:
\begin{equation}
  \mathbf{T} \leftarrow (1 - \eta)\mathbf{T} + \eta\mathbf{T}',
\label{eqn:ma}
\end{equation}
where $\eta\!\in\![0,1]$ is a step size controlling the trade-off between stability and adaptability. A smaller $\eta$ emphasizes stability by retaining more of the historical matrix, while a larger $\eta$ accelerates adaptation to the latest predictions.

These update rules enable SLT to progressively refine the transition matrix using increasingly confident predictions, while preserving robustness against stochastic noise.
%Besides, we provide further discussion on the update frequency in the {\bf Appendix}.

\noindent \textbf{Theoretical analysis of our framework.}
We first present the necessary results and further discuss their implications.
%Complete proofs are provided in the {\bf Appendix}.

By Eq.~(\ref{eqn:entropy}), we introduce a random process based on the historical minimum of $s$. Let $s^{(t)}$ be the normalized predictive entropy computed at the $t$-th iteration. We define the historical minimum value up to iteration $t$ as $ S^{(t)} := \min_{1 \leq \tau \leq t}s^{(\tau)} = \min\{ s^{(0)}, s^{(1)}, ..., s^{(t)} \}$.

From this definition, $S^{(t)}$ is a monotonically non-increasing sequence. We can express the change at the $t$-th step as $S^{(t)} = S^{(t-1)} - \Delta^{(t)}$, where $\Delta^{(t)} := S^{(t-1)} - S^{(t)} = \bigl( S^{(t-1)} - s^{(t)} \bigr)_+ \geq 0$ and $(\cdot)_+$ denotes $\max(\cdot, 0)$. Unrolling this recursion yields the explicit form $S^{(t)} = S^{(0)} - \sum_{\tau=1}^{t}\Delta^{(\tau)}$.

Next, we will show that the process $\{S^{(t)}\}_{t \ge 1}$ forms a supermartingale, which is a key step towards proving the convergence of our method.

\begin{lemma}[Supermartingale Property]\label{lem:spmg}
Let $\{S^{(t)}\}_{t \ge 1}$ be a sequence of random variables and the natural filtration $\mathcal{F}^{(t)} = \sigma\bigl(S^{(0)}, \Delta^{(1)}, \cdots, \Delta^{(t)} \bigr)$. Then, the sequence $\{S^{(t)}\}_{t \ge 1}$ is a supermartingale with respect to the filtration $\{\mathcal{F}^{(t)}\}_{t \ge 1}$.
\end{lemma}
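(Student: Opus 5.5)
The plan is to verify the three conditions of Def.~\ref{def:martingale} directly, using the explicit representation $S^{(t)} = S^{(0)} - \sum_{\tau=1}^{t}\Delta^{(\tau)}$ with $\Delta^{(\tau)} = \bigl(S^{(\tau-1)} - s^{(\tau)}\bigr)_+ \ge 0$. The argument relies only on two facts: $S^{(t)}$ is a deterministic function of the generators of $\mathcal{F}^{(t)}$, and each increment is nonnegative. No property of the \qnn\ training dynamics is needed beyond this.

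\textbf{Step 1 (adaptedness).} By the unrolled recursion, $S^{(t)}$ is a finite linear combination of $S^{(0)},\Delta^{(1)},\ldots,\Delta^{(t)}$. These are exactly the random variables generating $\mathcal{F}^{(t)}$, so $S^{(t)}$ is $\mathcal{F}^{(t)}$-measurable. The filtration is increasing because each $\sigma$-algebra adds one more generator, so condition (i) holds.

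\textbf{Step 2 (integrability).} By Eq.~(\ref{eqn:entropy}), each $s^{(\tau)}$ is a normalized Shannon entropy. Hence $0 \le s^{(\tau)} \le 1$, using $0\log 0 = 0$ and the fact that the entropy of a $K$-point distribution is at most $\log K$. Since $S^{(t)}$ is a minimum of such values, $0 \le S^{(t)} \le 1$, and therefore $\mathbb{E}[|S^{(t)}|] \le 1 < \infty$. This gives condition (ii). It also shows the process is $L^1$-bounded uniformly in $t$, which we record for the later application of Theorem~\ref{thm:fct}.

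\textbf{Step 3 (supermartingale inequality).} Write $S^{(t+1)} = S^{(t)} - \Delta^{(t+1)}$. Take conditional expectations and use that $S^{(t)}$ is $\mathcal{F}^{(t)}$-measurable (Step 1), which gives
\begin{equation*}
\mathbb{E}\bigl[S^{(t+1)} \mid \mathcal{F}^{(t)}\bigr] = S^{(t)} - \mathbb{E}\bigl[\Delta^{(t+1)} \mid \mathcal{F}^{(t)}\bigr] \le S^{(t)} \quad \text{a.s.}
\end{equation*}
The inequality follows from $\Delta^{(t+1)} \ge 0$ and positivity of conditional expectation. The conditional expectation exists because $0 \le \Delta^{(t+1)} \le S^{(t)} \le 1$.

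The only delicate point is Step 1, namely making sure the quantities are genuinely random variables on a common probability space. For this I would take the underlying randomness to be that of the training (initialization, minibatching, shot noise) and treat each $s^{(t)}$ as measurable. I would also note that the index starts at $S^{(0)}$ as defined, even though the lemma states $t \ge 1$. Once this is settled, the proof amounts to the observation that any adapted, bounded, pathwise non-increasing process is a supermartingale.
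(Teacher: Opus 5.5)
Your proof is correct and follows the same route as the paper's: adaptedness from the unrolled recursion $S^{(t)} = S^{(0)} - \sum_{\tau\le t}\Delta^{(\tau)}$, integrability from $s\in[0,1]$, and the supermartingale inequality from the $\mathcal{F}^{(t)}$-measurability of $S^{(t)}$ together with $\Delta^{(t+1)}\ge 0$. The only difference is cosmetic: the paper first applies a Jensen-type bound $\mathbb{E}[\max(X,0)\mid\mathcal{F}] \ge \max(\mathbb{E}[X\mid\mathcal{F}],0)$ before using nonnegativity, whereas you appeal directly to positivity of conditional expectation and so skip that step.
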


\begin{theorem}[Convergence of the Transition]\label{thm:convergence}
The sequence of transition matrices 
$\{ {\bf T}^{(t)} \}_{t\geq1}$ in Algo.~\ref{algo:slt} converges almost surely to a fixed-point matrix ${\bf T}^{*}$.
\end{theorem}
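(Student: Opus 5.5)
The plan is to split the argument into two stages: first show that the update trigger in Algo.~\ref{algo:slt} (line~7) eventually becomes inactive or harmless, and then show that $\mathbf{T}^{(t)}$ inherits convergence from this. For the first stage, note that $s^{(t)}\in[0,1]$ by the normalization in Eq.~(\ref{eqn:entropy}), since $0\le -\sum_k p_{i,k}\log p_{i,k}\le \log K$. Hence $0\le S^{(t)}\le 1$ for all $t$, and the process is bounded in $L^1$ by $1$. By Lemma~\ref{lem:spmg} it is a supermartingale, so Theorem~\ref{thm:fct} gives an almost surely finite limit $S^{(\infty)}$. From the unrolled form $S^{(t)}=S^{(0)}-\sum_{\tau\le t}\Delta^{(\tau)}$ it follows that $\sum_{\tau}\Delta^{(\tau)}=S^{(0)}-S^{(\infty)}\le 1$ almost surely, and in particular $\Delta^{(t)}\to 0$. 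The update times are exactly the set $\mathcal{U}=\{t:\Delta^{(t)}>0\}$, because $\mathbf{T}$ changes only when $s^{(t)}<S^{(t-1)}$.

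Next I would fix an $\omega$ in this almost sure event and split into two cases. \emph{Case 1: $\mathcal{U}$ is finite.} Then there is a last update time $t_0$, and $\mathbf{T}^{(t)}=\mathbf{T}^{(t_0)}=:\mathbf{T}^*$ for all $t\ge t_0$. Convergence is trivial here. \emph{Case 2: $\mathcal{U}$ is infinite.} Enumerate the update times as $t_1<t_2<\cdots$. By Eq.~(\ref{eqn:ma}), if the $n$-th refinement uses $\mathbf{T}'_n$, then
\begin{equation*}
\mathbf{T}^{(t_n)}=(1-\eta)^n\mathbf{T}^{(0)}+\eta\sum_{m=1}^{n}(1-\eta)^{n-m}\mathbf{T}'_m .
\end{equation*}
The key structural fact is that each $\mathbf{T}'_m$, defined by Eq.~(\ref{eqn:count}), is a function of the finite vectors $(\hat{y}_i,\tilde{y}_i)_{i\le |X|}$. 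It therefore ranges over a \emph{finite} set of row-stochastic matrices. Suppose the hard predictions $\hat{y}$ at update times are eventually constant, so that $\mathbf{T}'_m=\mathbf{T}'_\star$ for all $m\ge m_0$. Then the display gives $\mathbf{T}^{(t_n)}-\mathbf{T}'_\star=(1-\eta)^{n-m_0}\bigl(\mathbf{T}^{(t_{m_0})}-\mathbf{T}'_\star\bigr)$, which tends to $0$ geometrically for $\eta\in(0,1]$. So $\mathbf{T}^*=\mathbf{T}'_\star$, a fixed point of the map in Eq.~(\ref{eqn:ma}). (If $\eta=0$, $\mathbf{T}$ never changes.) Between update times $\mathbf{T}^{(t)}$ is constant, so convergence along $(t_n)$ gives convergence of the whole sequence.

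The main obstacle is justifying that the argmax labels $\hat{y}$ at update times stabilize in Case 2. The supermartingale limit gives only that the global entropy record converges and its decrements $\Delta^{(t)}$ are summable. By itself this does not pin down individual argmaxes. My first attempt would be to link the record-breaking predictive distributions to the parameters. With a small step size $\alpha$ and bounded gradients, the measurement statistics of the \vqcs\ are smooth (trigonometric) functions of $\bm{\theta}$, so consecutive prediction matrices move by $O(\alpha)$. I would then argue that along record times the prediction matrices form a Cauchy-like sequence whose limit has entropy $S^{(\infty)}$. Away from ties in the limiting distribution, the argmax is locally constant, which forces $\hat{y}$ to be eventually fixed. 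If this cannot be made rigorous without extra hypotheses, I would state it as an explicit assumption: no persistent ties in the limiting predictions, or equivalently eventual stability of $\mathbf{T}'$. The theorem then follows from the two-case argument above, with the supermartingale convergence supplying the vanishing of the trigger.
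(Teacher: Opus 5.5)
\textbf{Verdict: genuine gap.} Your first stage and Case 1 are fine, and your reduction of Case 2 to the behaviour of $\mathbf{T}'$ at update times is correct. But Case 2 is where the content of the theorem lies, and you have not closed it.

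\textbf{The missing step cannot be derived from the stated hypotheses.} For $\eta\in(0,1]$ the update rule inverts as $\mathbf{T}'_n=\bigl(\mathbf{T}^{(t_n)}-(1-\eta)\mathbf{T}^{(t_{n-1})}\bigr)/\eta$. So convergence of $\mathbf{T}$ along the update times forces $\mathbf{T}'_n$ to converge. Since $\mathbf{T}'_n$ takes finitely many values, it must then be eventually constant. Eventual constancy of $\mathbf{T}'$ (not of $\hat y$, which is stronger) is therefore necessary and sufficient in Case 2. Nothing in the setup supplies it, because the supermartingale only controls one scalar.

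A counterexample trajectory shows this. Suppose $s^{(t)}=\tfrac12+\tfrac1{t+2}$, so every step is an update and $\Delta^{(t)}\to0$. Meanwhile one borderline instance's argmax flips every step, so $\mathbf{T}'$ alternates between two distinct matrices $A\neq B$. Then $\mathbf{T}^{(t)}$ approaches the 2-cycle $\{(A+(1-\eta)B)/(2-\eta),\,(B+(1-\eta)A)/(2-\eta)\}$. Its two points differ by $\eta(A-B)/(2-\eta)\neq 0$, so $\mathbf{T}^{(t)}$ does not converge.

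\textbf{Your proposed repair does not exclude this trajectory.}
\begin{itemize}
\item With a constant step size $\alpha$ (the paper uses a fixed learning rate with Adam), per-step movement of order $\alpha$ is not summable, so it gives no Cauchy property for the prediction matrices.
\item Convergence of the entropy record does not force the $|X|\times K$ prediction matrix to converge. The level set of $s$ contains distributions with different argmaxes.
\item The smooth, near-uniform QNN outputs the paper emphasizes make near-ties for hard samples common rather than exceptional.
\end{itemize}
With your fallback assumption you prove a conditional theorem, not the stated one.

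\textbf{The paper's own proof has the same hole.} It runs your first stage (Doob, then $\Delta^{(t)}\to0$). It then asserts that this makes updates ``asymptotically infrequent,'' and hence that $\{\mathbf{T}^{(t)}\}$ is Cauchy. Both inferences fail for the reasons above. First, $\Delta^{(t)}>0$ can hold for infinitely many $t$ while $\Delta^{(t)}\to0$. Second, the size of an update, $\eta\|\mathbf{T}'-\mathbf{T}\|$, is not controlled by $\Delta^{(t)}$ at all.

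Your case split makes explicit exactly the step the paper glosses over. The correct conclusion is that the theorem needs an added hypothesis: either finitely many updates, or eventual stability of $\mathbf{T}'$ at update times. Under either one, your argument is complete.
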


Thm.~\ref{thm:convergence} establishes convergence of the update dynamics, thereby theoretically ensuring the stability of our proposed method. We provide full proof in \textbf{Appendix~\ref{app:proof}}.
Notably, our framework aims to improve classification and training stability of \qnns\ under noisy labels, rather than forcing predictions to approach the unobserved clean labels. Since overly aggressive fitting to latent clean labels can increase overfitting risk, this work prioritizes stability-aware training under label noise.

\section{Experiment}
\label{sec:exp}
In this section, we first introduce the experimental settings and further present our results in detail.

\begin{table*}[t]
\centering
\newcommand{\std}[1]{{\scriptsize\ (#1)}}
\caption{Comparison with baselines (CE, Forward) and competing methods on Breast, Pneumonia, and Retina, under three noise settings (UN: Uniform, CF: Cyclic-flipping, CM: Custom-mapping) at noise levels 10\%/30\%/50\%. We report mean(std) of F1~(\%) over five runs.}
\label{tab:compare1}
%\footnotesize
\scriptsize
\setlength{\tabcolsep}{1.2pt}
\begin{tabular}{c|c|ccc|ccc|ccc}
\toprule
\multirow{2}{*}{\textbf{Noise}} & \multirow{2}{*}{\textbf{Models}}
& \multicolumn{3}{c|}{\textbf{Breast}}
& \multicolumn{3}{c|}{\textbf{Pneumonia}}
& \multicolumn{3}{c}{\textbf{Retina}} \\
\cmidrule(lr){3-5}\cmidrule(lr){6-8}\cmidrule(l){9-11}
 & & 10\% & 30\% & 50\% & 10\% & 30\% & 50\% & 10\% & 30\% & 50\% \\
\midrule
\multirow{9}{*}{\textbf{UN}}
 & CE & 66.45\std{3.54} & 61.21\std{5.07} & 47.19\std{2.71} & 84.87\std{2.56} & 83.69\std{4.26} & 60.82\std{2.85} & 27.39\std{0.73} & 25.14\std{2.47} & 22.71\std{3.25}\\
 & Forward & 67.07\std{2.17} & 64.33\std{6.12} & 48.00\std{2.56} & 85.18\std{1.88} & 84.12\std{1.89} & 81.32\std{0.67} & 28.77\std{1.78} & 26.95\std{3.18} & 24.51\std{2.68}\\
 & T-Revision & 64.82\std{3.81} & 60.06\std{4.95} & 43.86\std{3.68} & 74.24\std{2.93} & 68.46\std{5.20} & 66.22\std{6.34} & 21.71\std{3.09} & 20.60\std{2.17} & 20.17\std{3.84}\\
 & Dual-T & 67.06\std{1.23} & 64.30\std{5.74} & 49.25\std{2.53} & 85.64\std{1.64} & 85.27\std{2.85} & 81.71\std{1.41} & 28.77\std{1.78} & 26.95\std{3.18} & 24.51\std{2.68}\\
 & VolMinNet & 66.71\std{1.52} & 63.95\std{3.41} & 47.43\std{2.15} & 85.02\std{1.24} & 84.60\std{1.82} & 76.00\std{2.51} & 28.54\std{1.80} & 25.80\std{2.17} & 23.54\std{3.59}\\
 & TVR & 68.06\std{1.56} & 61.74\std{3.50} & 50.61\std{3.29} & 76.65\std{2.38} & 76.93\std{2.32} & 68.62\std{3.80} & 20.90\std{6.73} & 18.09\std{5.81} & 15.91\std{4.18}\\
 & BLTM & 67.31\std{4.36} & 59.42\std{5.29} & 45.02\std{2.76} & 83.59\std{2.32} & 83.21\std{3.87} & 65.92\std{8.62} & 24.27\std{1.98} & 21.61\std{4.69} & 17.25\std{8.35}\\
 & CCR & 67.85\std{2.19} & 62.26\std{3.89} & 51.48\std{3.48} & 84.86\std{2.71} & 85.16\std{1.57} & 79.54\std{2.54} & 27.68\std{2.86} & 27.03\std{2.02} & 24.20\std{3.50}\\
 & Ours & \textbf{69.14\std{2.71}} & \textbf{66.04\std{4.34}} & \textbf{52.24\std{4.41}} & \textbf{86.31\std{1.63}} & \textbf{86.27\std{1.81}} & \textbf{83.90\std{2.09}} & \textbf{33.71\std{2.93}} & \textbf{28.29\std{2.23}} & \textbf{25.57\std{2.88}}\\
\midrule
\multirow{9}{*}{\textbf{CF}}
 & CE & 65.64\std{4.29} & 59.14\std{2.91} & 46.70\std{3.66} & 82.75\std{0.77} & 71.42\std{2.31} & 45.31\std{3.87} & 27.40\std{3.41} & 26.54\std{2.55} & 23.38\std{2.63}\\
 & Forward & 66.29\std{2.01} & 60.82\std{5.02} & 49.87\std{2.88} & 83.13\std{2.24} & 79.46\std{2.72} & 43.23\std{5.28} & 29.66\std{1.54} & 29.46\std{1.56} & 21.03\std{2.98}\\
 & T-Revision & 52.18\std{1.10} & 51.58\std{2.45} & 43.73\std{4.43} & 78.46\std{1.02} & 69.46\std{2.53} & 36.22\std{3.54} & 24.48\std{0.75} & 22.46\std{1.25} & 21.46\std{1.64}\\
 & Dual-T & 66.17\std{3.55} & 60.51\std{3.16} & 48.17\std{4.61} & 83.90\std{2.43} & 79.34\std{1.89} & 35.78\std{6.82} & 25.45\std{1.66} & 24.93\std{5.33} & 18.13\std{7.53}\\
 & VolMinNet & 65.89\std{2.31} & 60.68\std{2.59} & 46.27\std{5.31} & 84.37\std{1.47} & 79.50\std{2.74} & 34.44\std{7.50} & 25.62\std{1.96} & 23.81\std{5.69} & 17.49\std{7.77}\\
 & TVR & 64.18\std{2.18} & 59.76\std{6.17} & 43.21\std{4.97} & 75.28\std{2.65} & 55.29\std{2.67} & 44.30\std{5.74} & 18.47\std{4.46} & 19.96\std{6.60} & 19.03\std{7.43}\\
 & BLTM & 67.91\std{3.79} & 58.45\std{5.29} & 49.63\std{2.51} & 81.26\std{1.59} & 73.75\std{4.23} & 46.62\std{2.59} & 25.41\std{4.15} & 25.31\std{5.33} & 17.18\std{7.04}\\
 & CCR & 66.85\std{1.36} & 60.63\std{3.42} & 47.26\std{4.46} & 82.97\std{2.86} & 80.16\std{3.12} & 37.93\std{6.89} & 28.12\std{2.11} & 27.27\std{3.64} & 20.92\std{3.55}\\
 & Ours & \textbf{71.22\std{2.89}} & \textbf{63.53\std{3.82}} & \textbf{51.18\std{3.29}} & \textbf{85.53\std{1.30}} & \textbf{81.21\std{2.79}} & \textbf{47.44\std{5.78}} & \textbf{31.84\std{3.66}} & \textbf{30.96\std{2.67}} & \textbf{28.40\std{3.63}}\\
\midrule
\multirow{9}{*}{\textbf{CM}}
 & CE & 65.64\std{4.29} & 59.14\std{2.91} & 46.70\std{3.66} & 82.75\std{0.77} & 71.42\std{2.31} & 45.31\std{3.87} & 27.77\std{1.08} & 26.72\std{1.89} & 22.83\std{1.61}\\
 & Forward & 66.29\std{2.01} & 60.82\std{5.02} & 49.87\std{2.88} & 83.13\std{2.24} & 79.46\std{2.72} & 43.23\std{5.28} & 29.44\std{2.32} & 27.39\std{2.53} & 21.17\std{2.94}\\
 & T-Revision & 52.18\std{1.10} & 51.58\std{2.45} & 43.73\std{4.43} & 78.46\std{1.02} & 69.46\std{2.53} & 36.22\std{3.54} & 23.97\std{4.58} & 19.25\std{4.73} & 18.62\std{5.48}\\
 & Dual-T & 66.17\std{3.55} & 60.51\std{3.16} & 48.17\std{4.61} & 83.90\std{2.43} & 79.34\std{1.89} & 35.78\std{6.82} & 26.18\std{2.61} & 27.39\std{2.53} & 21.17\std{2.94}\\
 & VolMinNet & 65.89\std{2.31} & 60.68\std{2.59} & 46.27\std{5.31} & 84.37\std{1.47} & 79.50\std{2.74} & 34.44\std{7.50} & 26.29\std{3.62} & 25.57\std{0.97} & 20.33\std{3.65}\\
 & TVR & 64.18\std{2.18} & 59.76\std{6.17} & 43.21\std{4.97} & 75.28\std{2.65} & 55.29\std{2.67} & 44.30\std{5.74} & 21.24\std{4.60} & 20.04\std{7.91} & 21.34\std{5.63}\\
 & BLTM & 67.91\std{3.79} & 58.45\std{5.29} & 49.63\std{2.51} & 81.26\std{1.59} & 73.75\std{4.23} & 46.62\std{2.59} & 23.78\std{5.57} & 20.97\std{5.24} & 18.04\std{4.85}\\
 & CCR & 66.85\std{1.36} & 60.63\std{3.42} & 47.26\std{4.46} & 82.97\std{2.86} & 80.16\std{3.12} & 37.93\std{6.89} & 27.90\std{1.23} & 24.58\std{5.99} & 19.65\std{3.90}\\
 & Ours & \textbf{71.22\std{2.89}} & \textbf{63.53\std{3.82}} & \textbf{51.18\std{3.29}} & \textbf{85.53\std{1.30}} & \textbf{81.21\std{2.79}} & \textbf{47.44\std{5.78}} & \textbf{30.56\std{2.78}} & \textbf{29.94\std{1.77}} & \textbf{26.44\std{3.05}}\\
\bottomrule
\end{tabular}
\end{table*}

\noindent \textbf{Dataset.}
We first validate our proposed framework on five MedMNIST datasets: {\bf Breast}MNIST, {\bf Pneumonia}MNIST, {\bf Retina}MNIST, {\bf Derma}MNIST, and {\bf Blood}MNIST~\cite{medmnistv1}. We select these datasets as they reflect the bottleneck of medical images: relatively small scale due to expensive data acquisition and annotation. This setting is more challenging than large-scale natural image datasets, since limited data magnifies the adverse effect of noisy labels, making robust learning particularly difficult.
Data statistics, noise settings, and additional validations under instance-dependent noise (IDN) are presented in {\bf Appendix~\ref{app:exp}}.
Furthermore, we validate our method on CheXpert~\cite{irvin2019chexpert}, a real-world dataset with noisy labels.

\noindent \textbf{Implementation details.}
We employ a backbone \qnn\ consisting of 8 qubits, 2 layers, and 3 rotation gates. The training process involves a 100-epoch warmup phase to build the transition matrix, followed by 100 epochs of training for loss correction. For optimization, we use the Adam optimizer with a learning rate of 0.01 and a batch size of 128. All results are reported as the mean and standard deviation over five independent runs using five distinct random seeds (from 42 to 46) to ensure reproducibility. The optimal hyperparameters for our framework are reported in Tab.~\ref{tab:hyperparams}. We report classification performance using the macro-averaged F1 score, as class imbalance is prevalent in medical image datasets. A higher F1 score implies a better classification.

\noindent \textbf{Baseline and competing methods.}
For fair comparison, we compare our framework with the standard baseline and competing methods that are widely used as anchor-free label transition approaches.
{\bf CE} denotes the approach that we directly train the model with cross-entropy loss without loss correction.
{\bf Forward} loss correction~\cite{patrini2017making} attempts to correct predictive probabilities with an estimated transition matrix.
{\bf T-Revision}~\cite{xia2019anchor} builds the initial transition matrix from high-confident predicted instances and then iteratively refines it with a learnable slack term $\Delta T$ while jointly optimizing the classifier by minimizing the weighted loss.
{\bf Dual-T}~\cite{yao2020dual} jointly leverages forward- and backward-corrected losses to more accurately estimate the noise transition matrix.
{\bf VolMinNet}~\cite{li2021provably} proposes an anchor-free framework that regularizes the cross-entropy loss with the volume of the simplex formed by the columns of the transition matrix.
{\bf TVR}~\cite{zhang2021learning} applies total variation regularization to identify ``cleanest" predictive probabilities without explicitly detecting anchor points.
{\bf BLTM}~\cite{yang2022estimating} leverages Bayes-optimal labels inferred from distilled clean samples to better estimate the transition matrix.
{\bf CCR}~\cite{cheng2022class} estimates the transition matrix using a forward-backward cycle-consistency regularization without relying on anchor points.
We provide hyperparameter settings in {\bf Appendix~\ref{app:exp}}.

\noindent \textbf{Our framework performs robustly against baselines and competing methods under three types of label noise across diverse datasets.}
We evaluate our framework (Ours) against two baselines (Cross-Entropy, i.e., CE, and Forward) and six representative anchor-free label-transition-based methods (T-Revision, Dual-T, VolMinNet, TVR, BLTM, CCR) on five MedMNIST datasets (Breast, Pneumonia, Retina, Derma, Blood) under three types of noise with varying noise ratios (10\%, 30\%, 50\%). All experiments are repeated five times, and we report the mean $\pm$ std of the F1 score. As shown in Tab.~\ref{tab:compare1}, our method outperforms competing approaches across most datasets and noise levels. For example, on Breast and Pneumonia, our framework maintains a clear advantage even under severe noise (50\%), achieving 52.24\% and 83.90\%, respectively, whereas conventional baselines such as CE and Forward suffer from substantial performance degradation. These results demonstrate that our framework achieves superior stability and robustness across diverse datasets and noise conditions, effectively mitigating the adverse effects of label noise on \qnns' performance. Due to the page limit, we present the results of Derma and Blood in \textbf{Appendix~\ref{app:exp}}.

\begin{figure*}[h]
    \centering
    \includegraphics[width=1.0\linewidth]{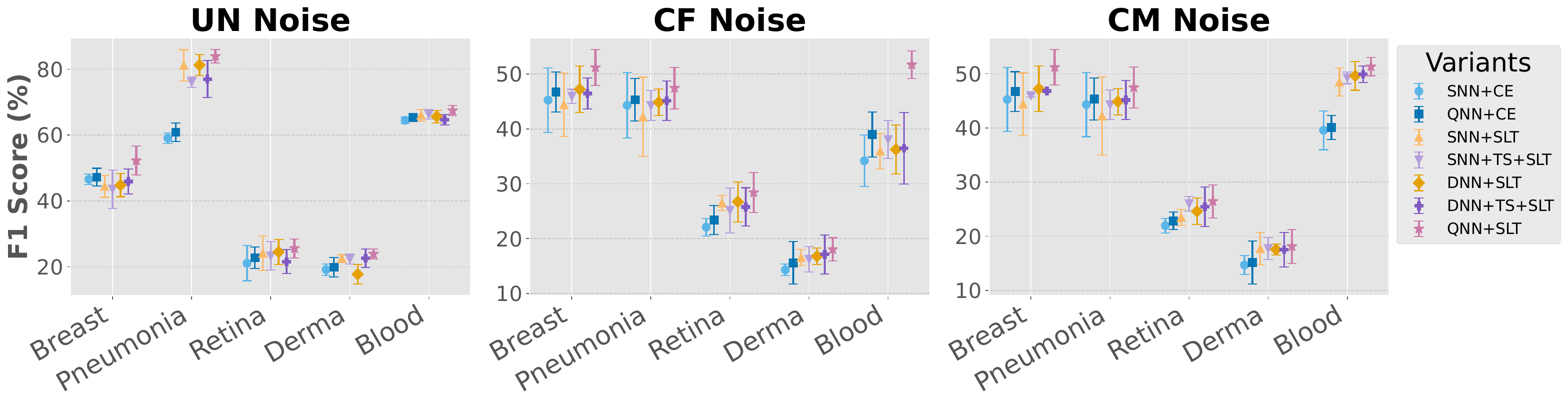}
    \caption{Ablation study on \vqcs\ evaluated by F1 score (mean $\pm$ std) across five datasets under three types of noises, uniform (UN), cyclic-flipping (CF), and custom-mapping (CM) noise, with 50\% noise ratio. We compare our backbone \qnn\ with classical shallow and deep neural networks (NNs) on two training schemes, direct training using cross-entropy (CE) and our framework (SLT), respectively. To assess how smoothness affects NNs' output, we apply temperature scaling (TS) to both NNs before employing SLT.}
    \label{fig:ablation}
\end{figure*}
\noindent \textbf{Ablation studies on backbone models.}
We conduct additional ablation studies to validate the effectiveness of the backbone \qnn\ under three types of noise with a 50\% noise ratio across five datasets. First, we compare our framework, SLT, with a baseline, training with cross-entropy (CE) only, on both classical and quantum models. Second, we compare three backbone models, a classical shallow neural network (SNN), a classical deep neural network (DNN), and a \qnn, using SLT, where the SNN consists of two fully connected layers, while the DNN is a ResNet~\cite{he2016deep} with ten residual blocks.
As shown in Fig.~\ref{fig:ablation}, the results demonstrate the effectiveness of \qnn, confirming its advantage over classical neural networks on datasets with limited instances and noisy annotations.
Besides, we observe that using classical models as the backbone of SLT may lead to performance degradation. We argue that classical models are prone to severe early-stage overconfidence~\cite{wei2022mitigating}, often pushing Softmax outputs to extremes and causing predictive entropy to collapse to nearly zero even under noisy labels. This problem is further amplified in our framework, since SLT relies on entropy reduction to update the transition matrix.
To further examine this overconfidence issue, we apply temperature scaling (TS)~\cite{guo2017calibration} to both SNN and DNN. We fine-tune the temperature on the validation set and select 1.5 in this setting. The results show that although DNN is generally slightly better than SNN in most cases, applying TS does not consistently improve the performance of either backbone in SLT. In contrast, \qnns\ exhibit an inherent ``natural smoothness'' rooted in the Born rule~\cite{born1926quantenmechanik}, where outputs are derived from projection probabilities of quantum states governed by unitary transformations and limited Hilbert space expressivity; this prevents the probability distribution from instantaneously collapsing to a one-hot state.
This gradual growth of confidence matches SLT's monotonic yet steady entropy reduction process well, allowing our framework to capture the true learning trajectory from uncertainty to certainty rather than being misled by the deceptive overfitting signals often seen in classical models. This observation further suggests that SLT has particular advantages when combined with \qnns.
Furthermore, we analyze the trainable parameters for backbone models and discuss how the smoothness affects \qnns\ in \textbf{Appendix~\ref{app:exp}}.

\begin{figure*}[h]
  \begin{subfigure}{1.0\linewidth}
    \centering
    \includegraphics[width=0.19\textwidth]{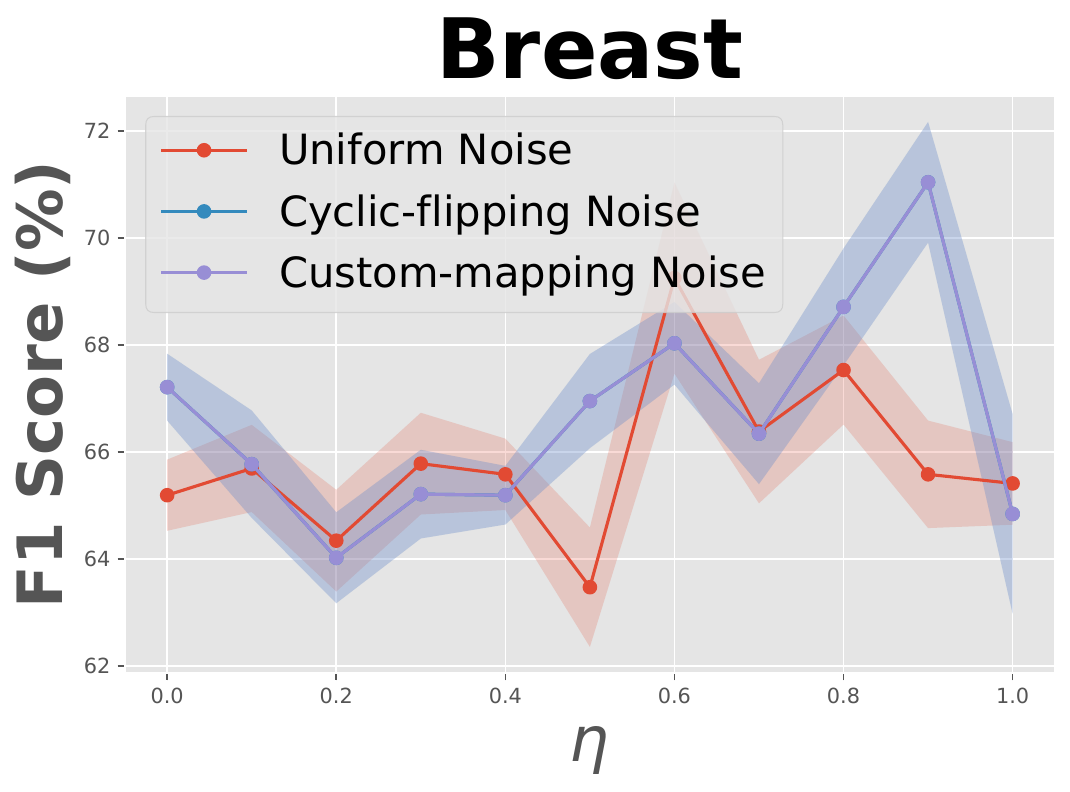}
    \includegraphics[width=0.19\textwidth]{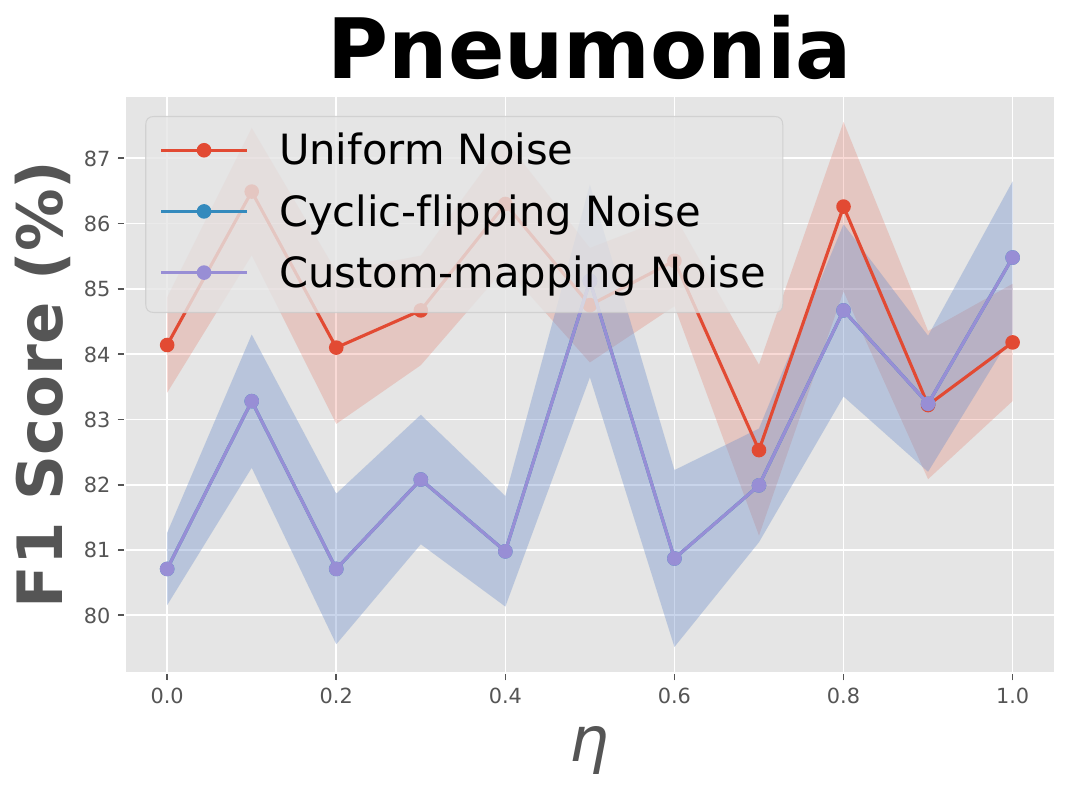}
    \includegraphics[width=0.19\textwidth]{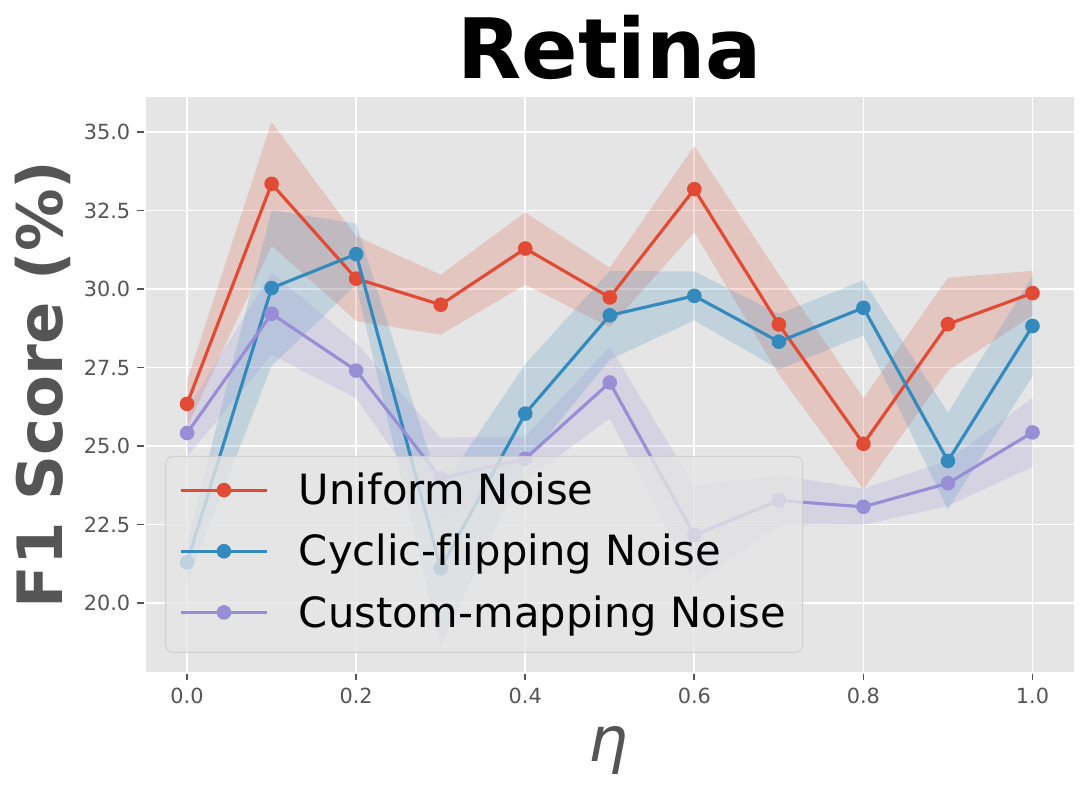}
    \includegraphics[width=0.19\textwidth]{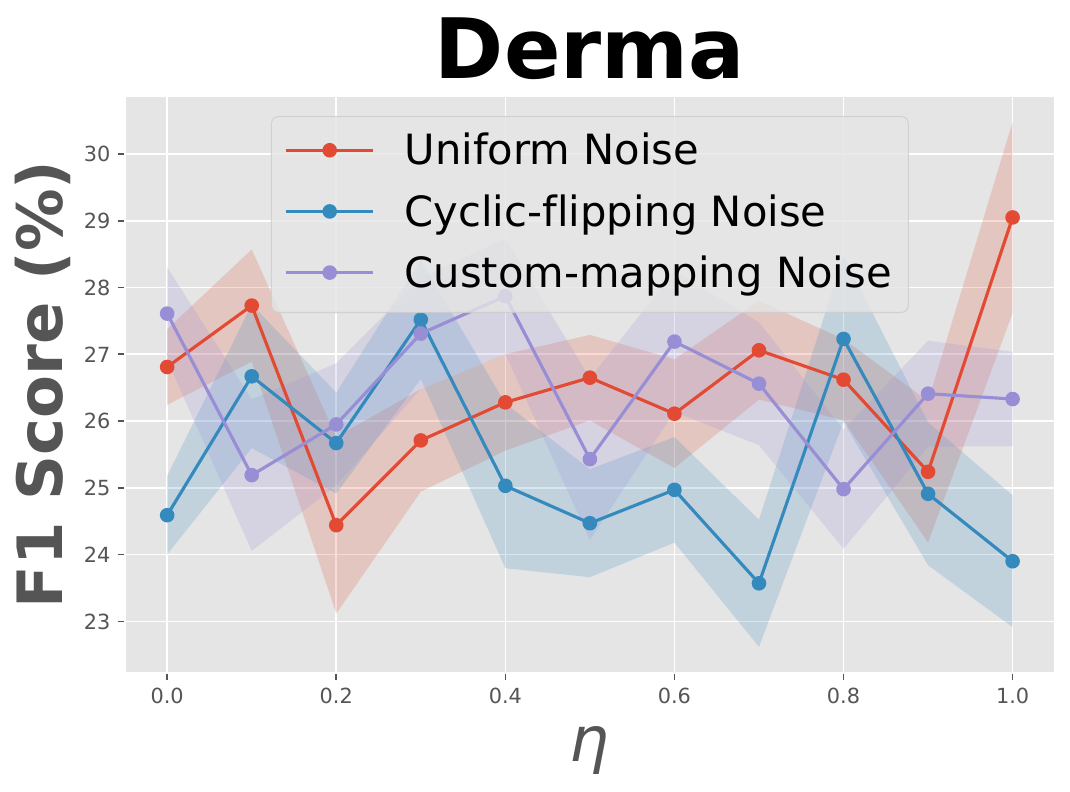}
    \includegraphics[width=0.19\textwidth]{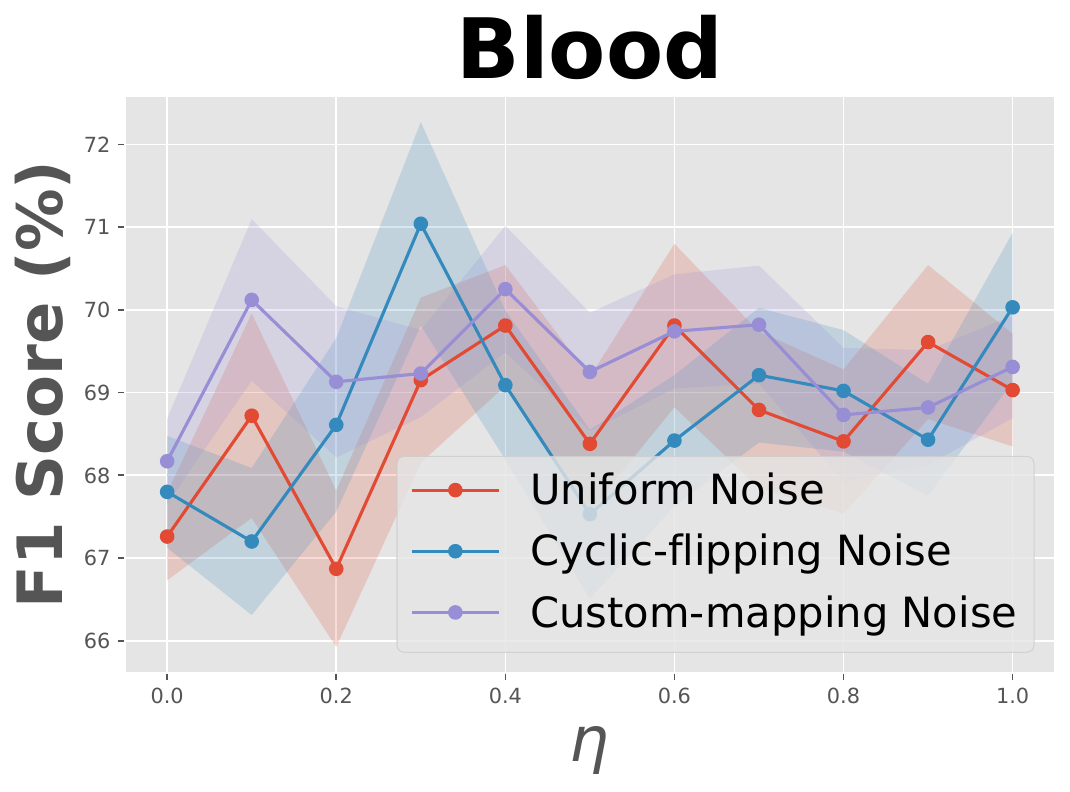}
  \end{subfigure}
\caption{Sensitivity analysis of hyperparameters $\eta$ under three noise types for five datasets. We run the experiments five times and present the mean F1 score with standard deviation in shaded regions.}
\label{fig:eta}
\end{figure*}
\noindent \textbf{Analysis of hyperparameters $\eta$.}
We analyze the sensitivity of hyperparameters $\eta$ under different noise types (Uniform, Cyclic-flipping, and Custom-mapping) with a 10\% noise ratio for five datasets on the validation set. The transition matrix is updated with the ``NDU+PTU'' mode (0.5 del, 10 pat). The update strategies are discussed in \textbf{Appendix~\ref{app:exp}}. Each curve represents the average F1 score, with the shaded area denoting the standard deviation across five runs.
We observe the patterns of $\eta$ in Fig.~\ref{fig:eta}. Overall, the curves show that the optimal $\eta$ varies across datasets. For asymmetric noises such as Cyclic-flipping and Custom-mapping, the performance is identical on Breast and Pneumonia due to binary classification tasks, where pair-wise label flips produce the same effect. Pneumonia tends to favor large values of $\eta$ under pair-wise noise, while more challenging datasets (Retina and Derma) achieve better results at either very small or extreme settings. Notably, although determining the optimal $\eta$ requires hyperparameter tuning, the performance degradation under sub-optimal settings is generally mild, thereby validating the robustness of our method.

\begin{figure*}[h]
  \begin{subfigure}{1.0\linewidth}
    \centering
    \includegraphics[width=0.19\textwidth]{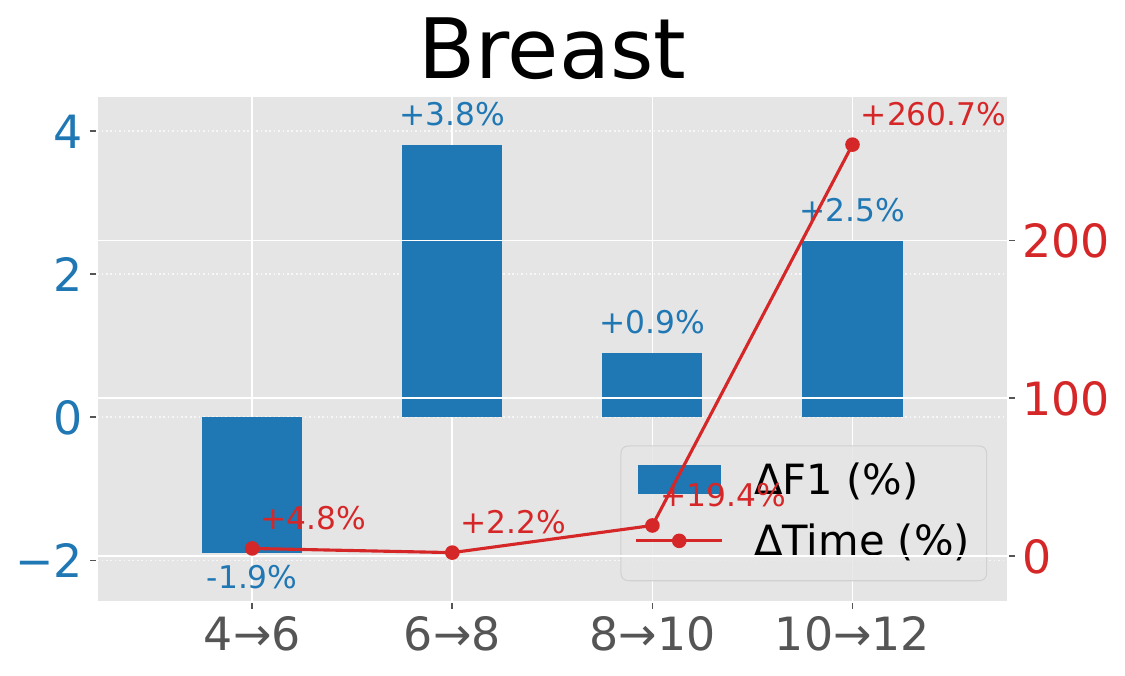}
    \includegraphics[width=0.19\textwidth]{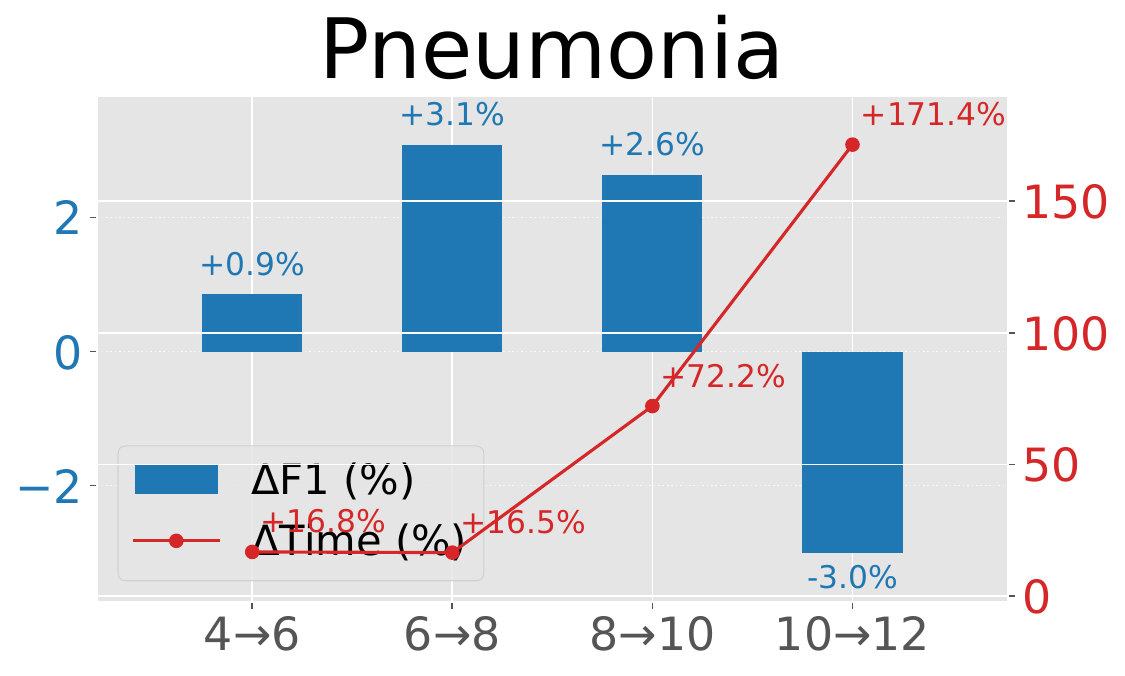}
    \includegraphics[width=0.19\textwidth]{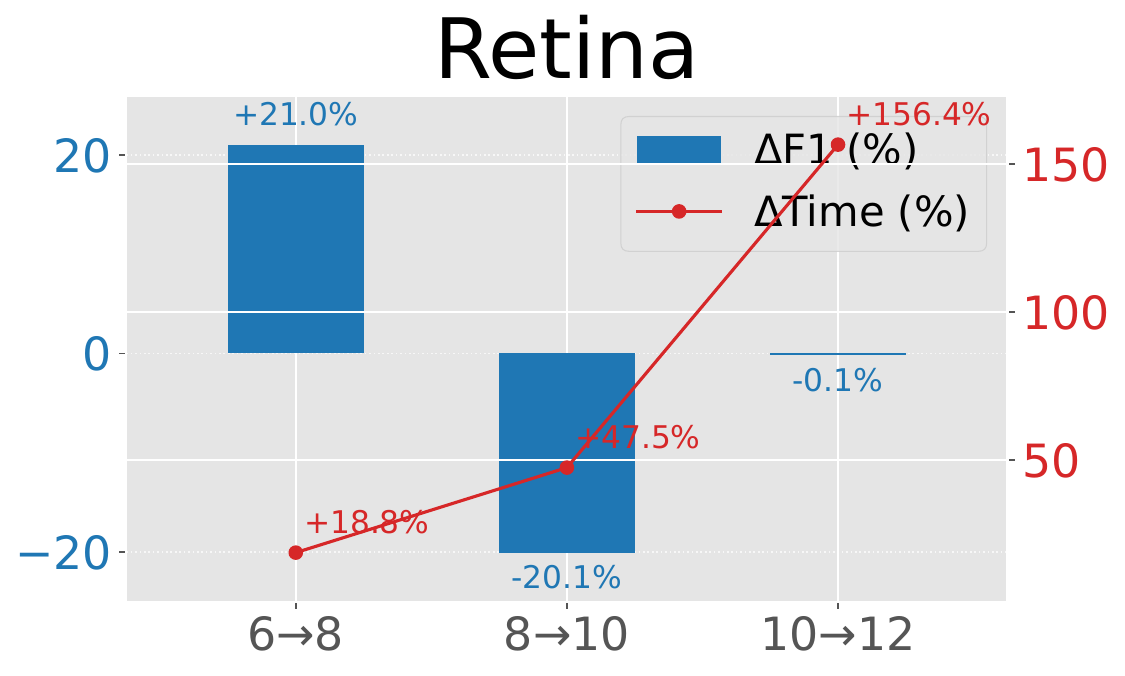}
    \includegraphics[width=0.19\textwidth]{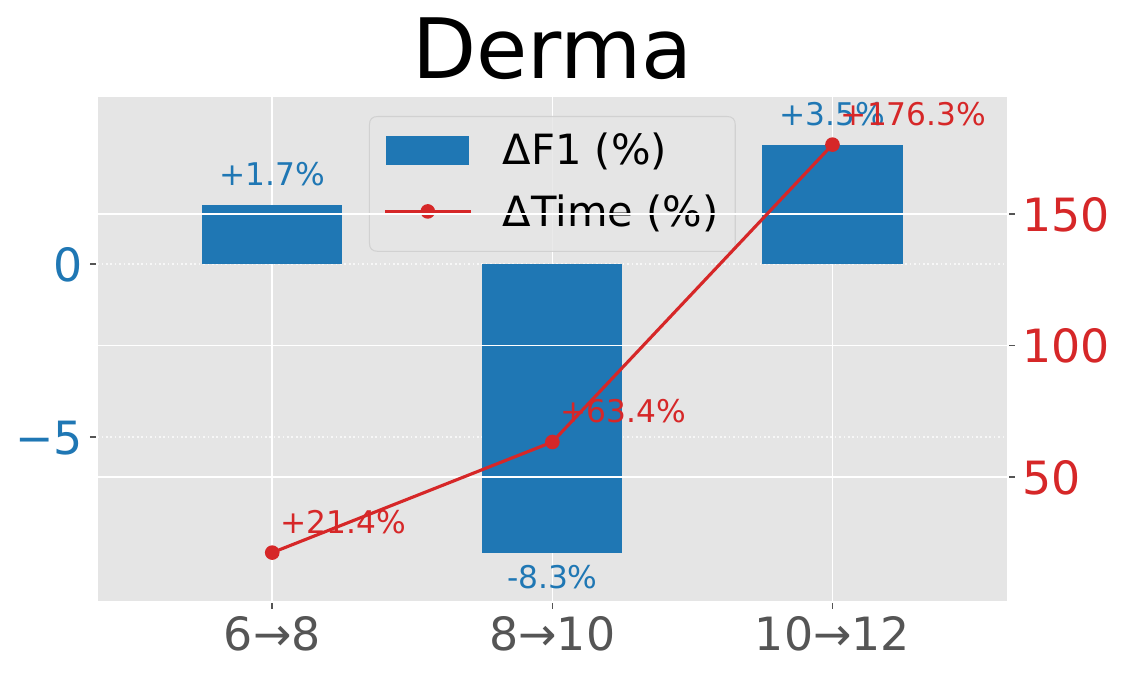}
    \includegraphics[width=0.19\textwidth]{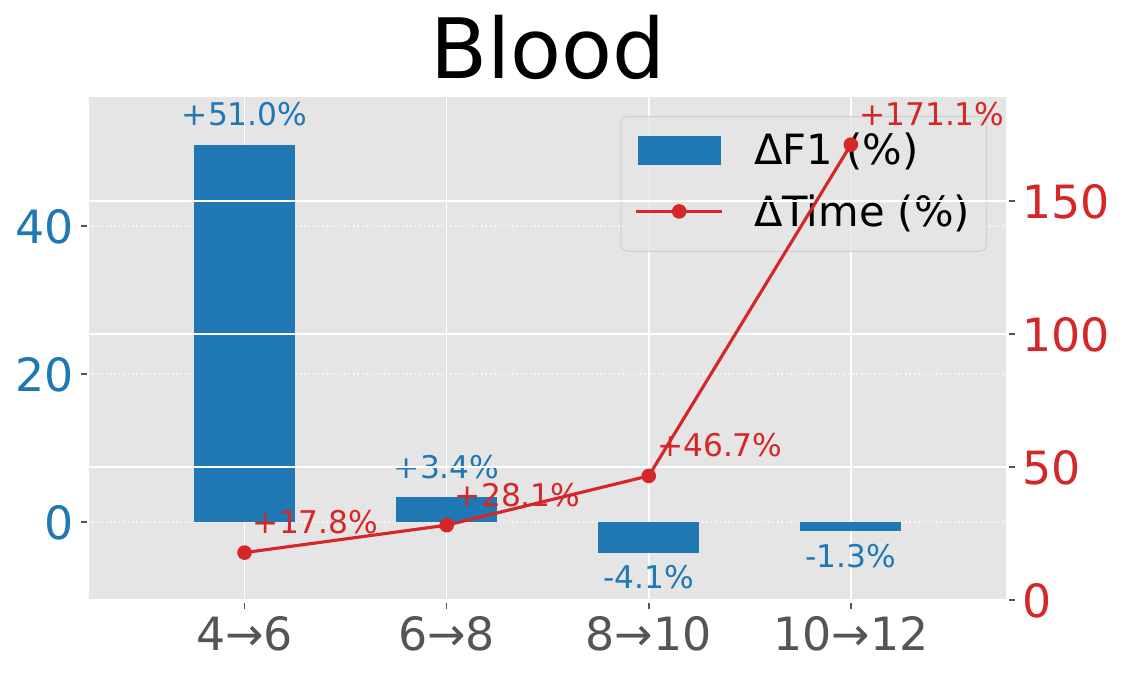}
  \end{subfigure}
  \begin{subfigure}{1.0\linewidth}
    \centering
    \includegraphics[width=0.19\textwidth]{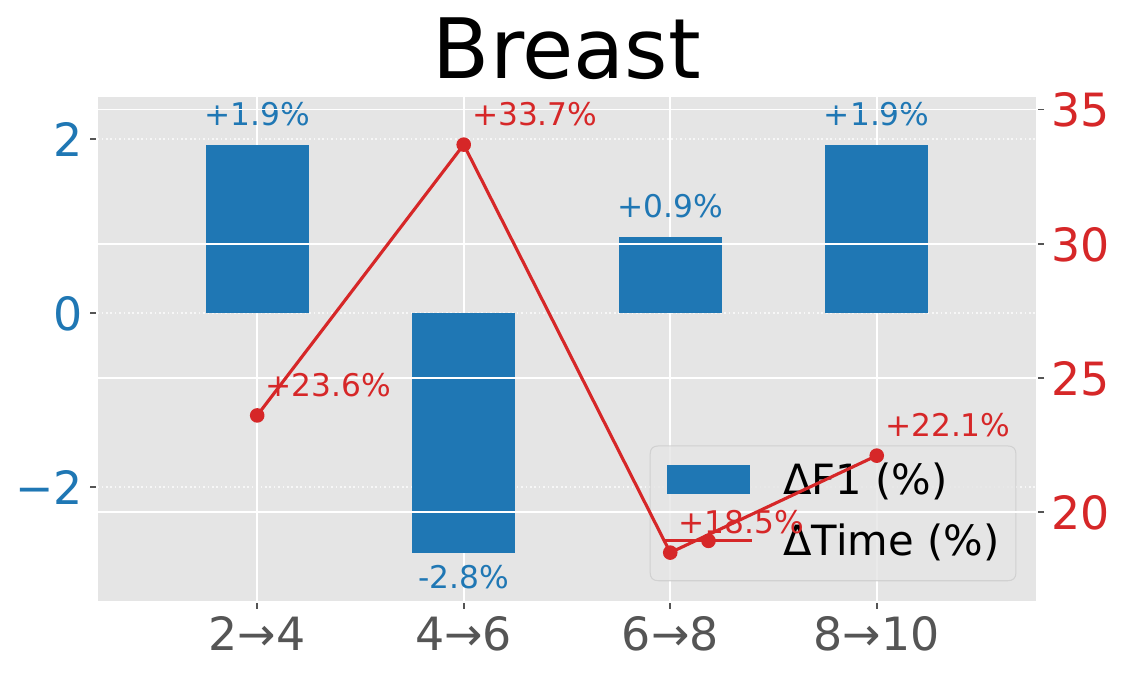}
    \includegraphics[width=0.19\textwidth]{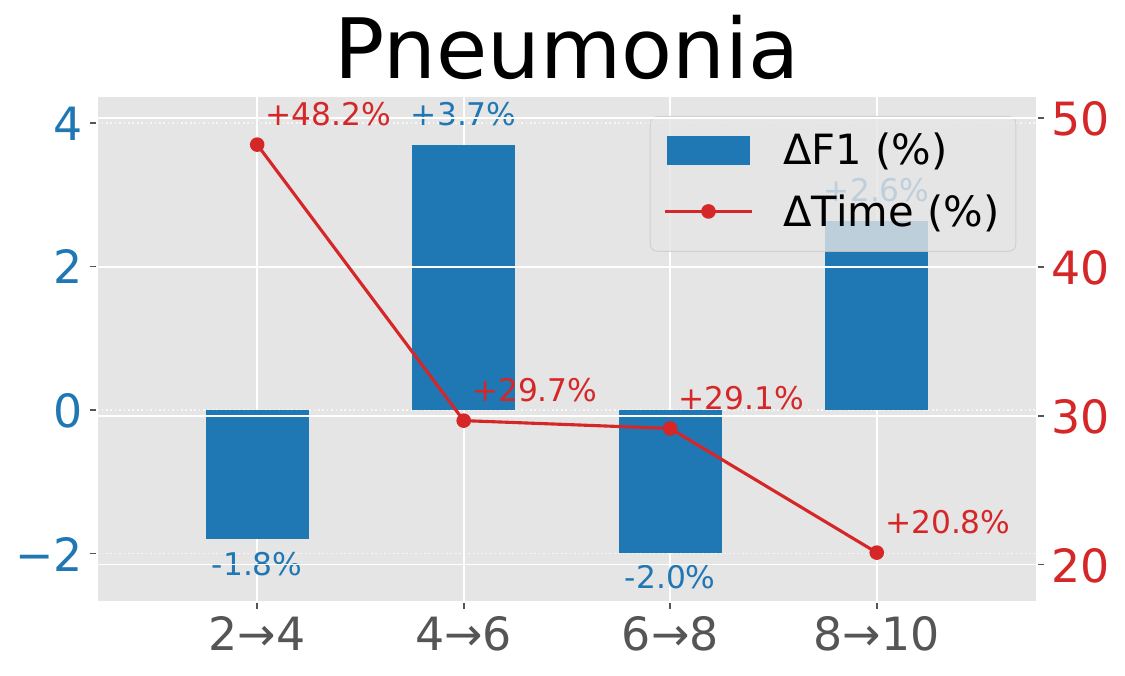}
    \includegraphics[width=0.19\textwidth]{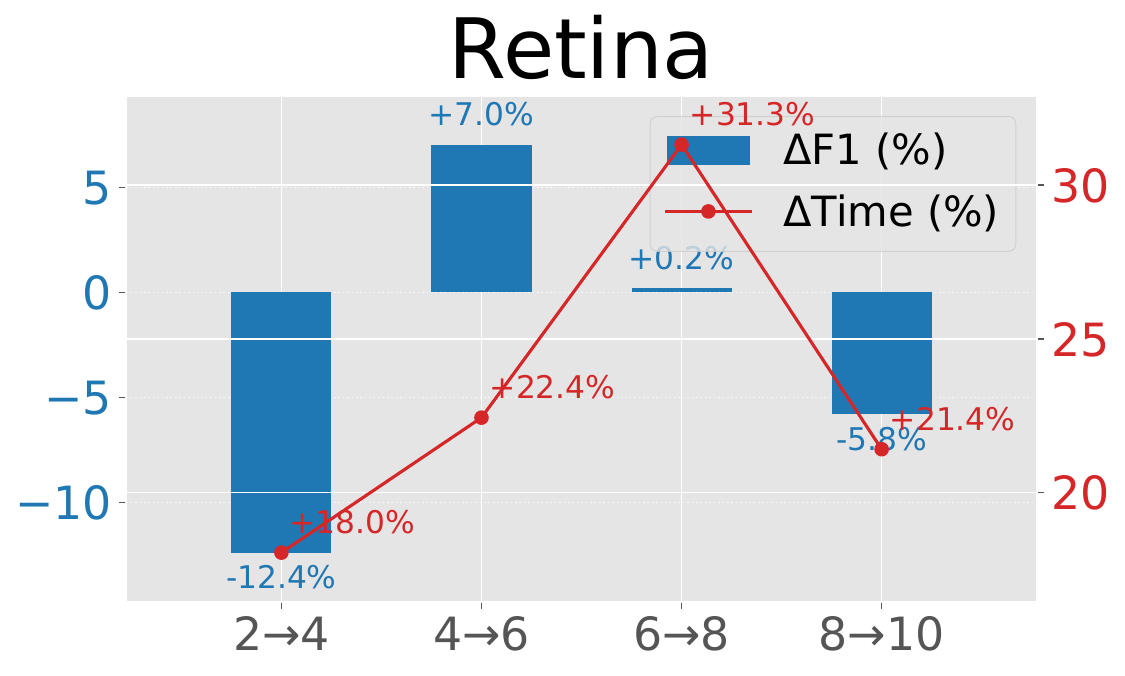}
    \includegraphics[width=0.19\textwidth]{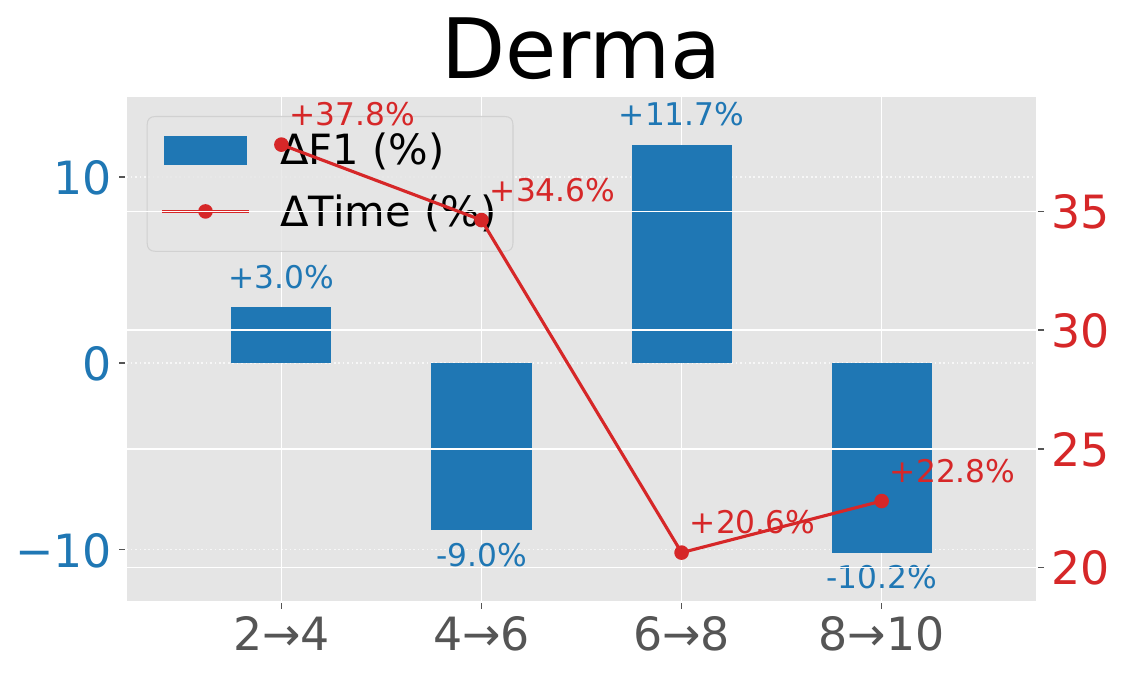}
    \includegraphics[width=0.19\textwidth]{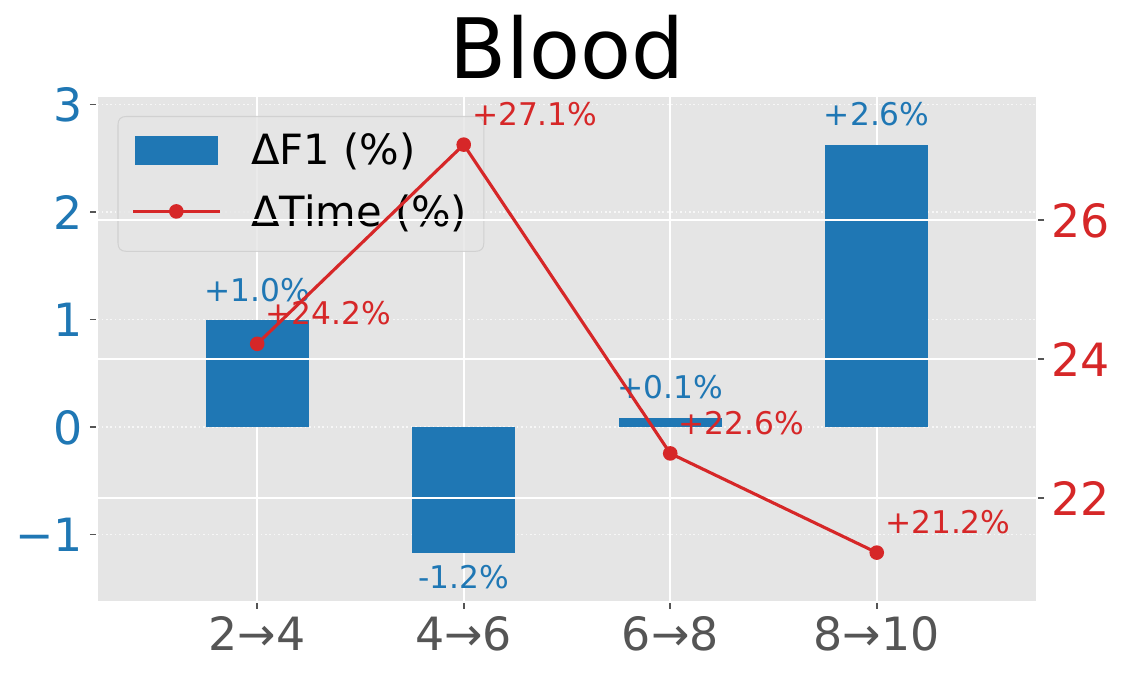}
  \end{subfigure}
\caption{Trade-off analysis under different numbers of qubits (upper row) and layers (lower row). We report incremental F1 gain (\%) in blue bars and Time cost (\%) in red lines as the number of consecutive qubits/layers increases.}
\label{fig:cpr}
\end{figure*}
\noindent \textbf{Trade-off Analysis under different numbers of qubits and layers.}
We first analyze the performance–cost trade-off of \qnns\ on the validation set and report incremental changes in F1 (blue bars) and runtime (red line) as the number of qubits increases from 4 to 12 while fixing the depth to two layers. To isolate the effect of qubit count, we perform a controlled ablation where only the number of qubits varies and all other settings are held constant across runs. The \qnn\ is evaluated under symmetric noise at a 10\% rate and trained with the ``NDU+PTU'' strategy (0.5 del, 10 pat).
In the upper row of Fig.~\ref{fig:cpr}, we observe that moving from 6 to 8 qubits consistently offers the best performance–cost trade-off, whereas further increases show clear diminishing (or even negative) returns with sharp cost inflation. For example, on Breast, employing 8 qubits yields 3.81\% F1 gain for only 2.18\% runtime increase. In contrast, increasing the qubits from 8 to 10 just obtains 0.90\% F1 gain at an additional 19.37\% runtime cost.
Notably, for Retina and Derma, a valid warm-up transition matrix fails to be built, i.e., predicted classes mismatched the classes in noisy labels, so that these cases are excluded in comparison.
Taken together across datasets, 8 qubits emerges as the global ``sweet spot'': before 8, F1 improves with moderate cost; beyond 8, F1 plateaus or declines while runtime escalates steeply. We therefore choose 8 qubits for this study.
Besides, we fix the number of qubits as 8 and vary the number of layers from 2 to 10. As the number of layers increases, the F1 score increases moderately, but the time cost increases hugely. Considering that we aim to enhance the training under noise-label setting with a cost-effective budget, we choose two layers so we can efficiently validate our methods.

\begin{figure*}[h]
  \begin{subfigure}{1.0\linewidth}
    \centering
    \includegraphics[width=0.19\textwidth]{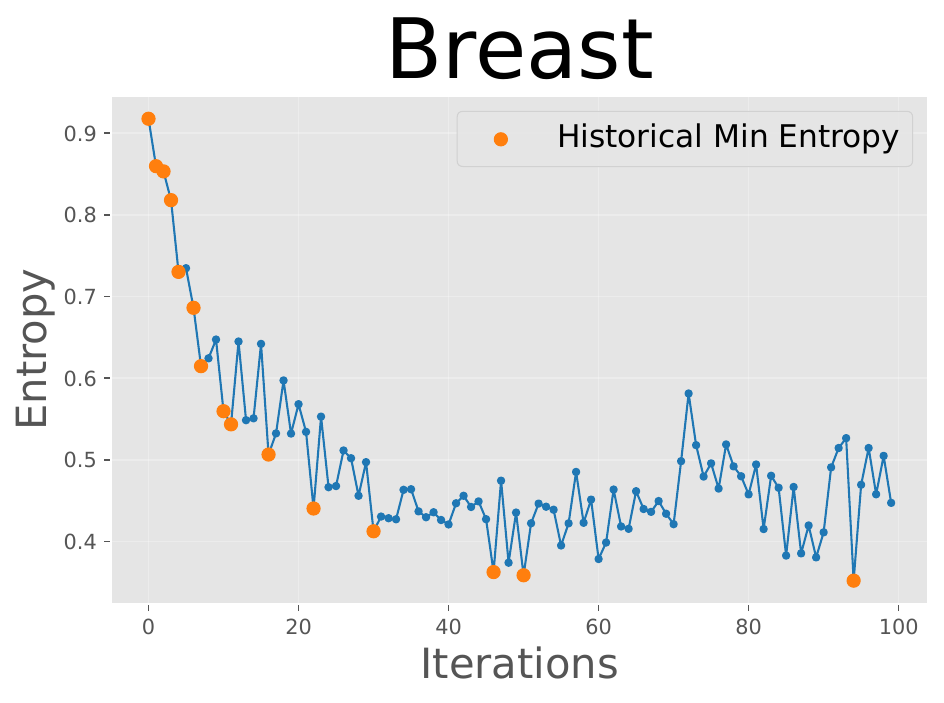}
    \includegraphics[width=0.19\textwidth]{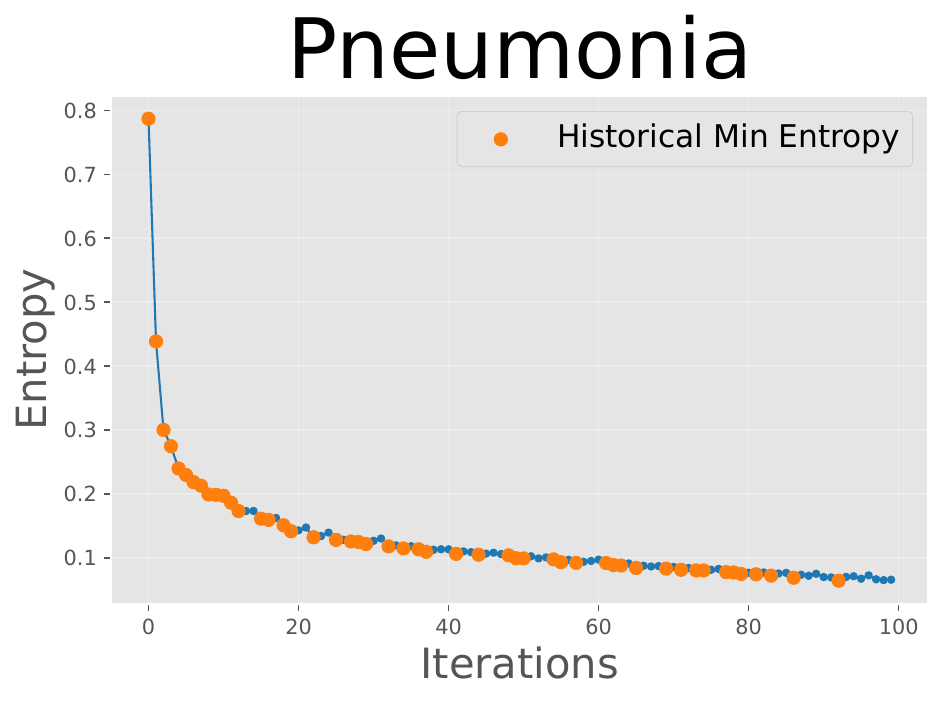}
    \includegraphics[width=0.19\textwidth]{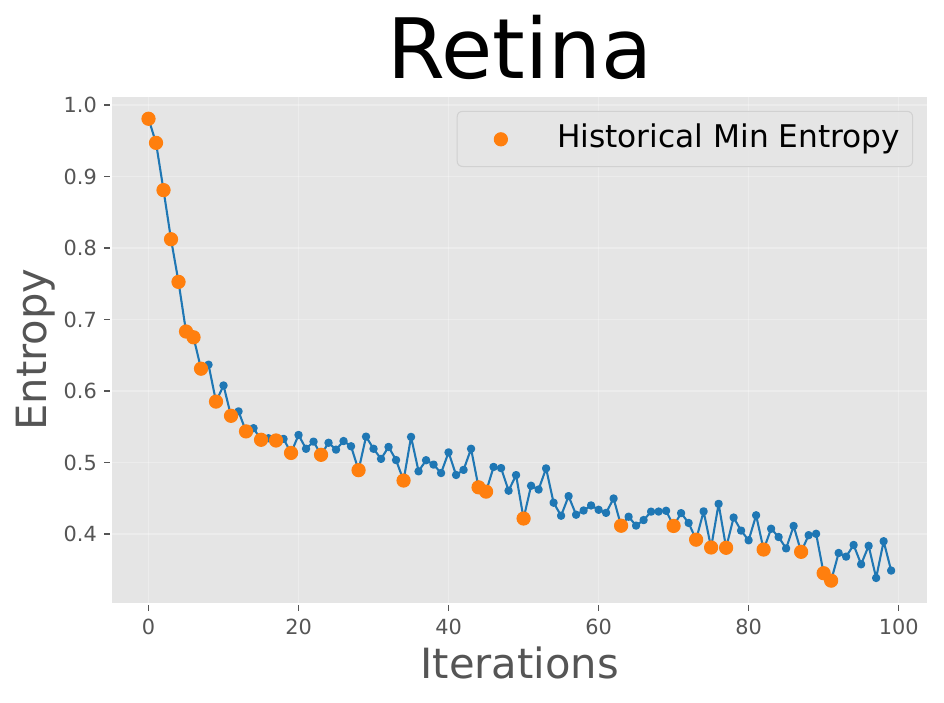}
    \includegraphics[width=0.19\textwidth]{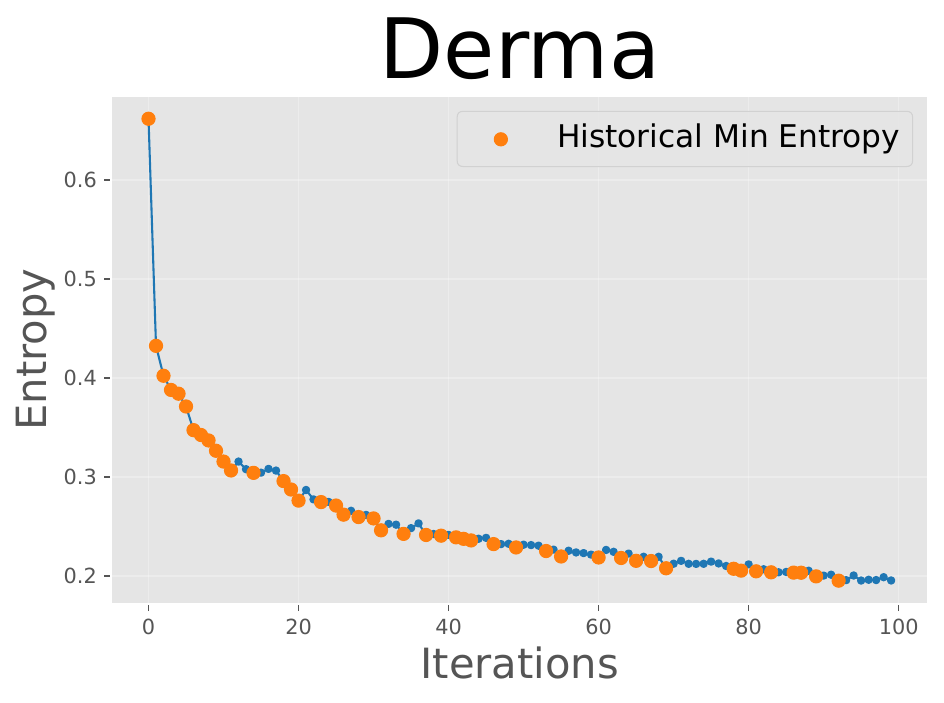}
    \includegraphics[width=0.19\textwidth]{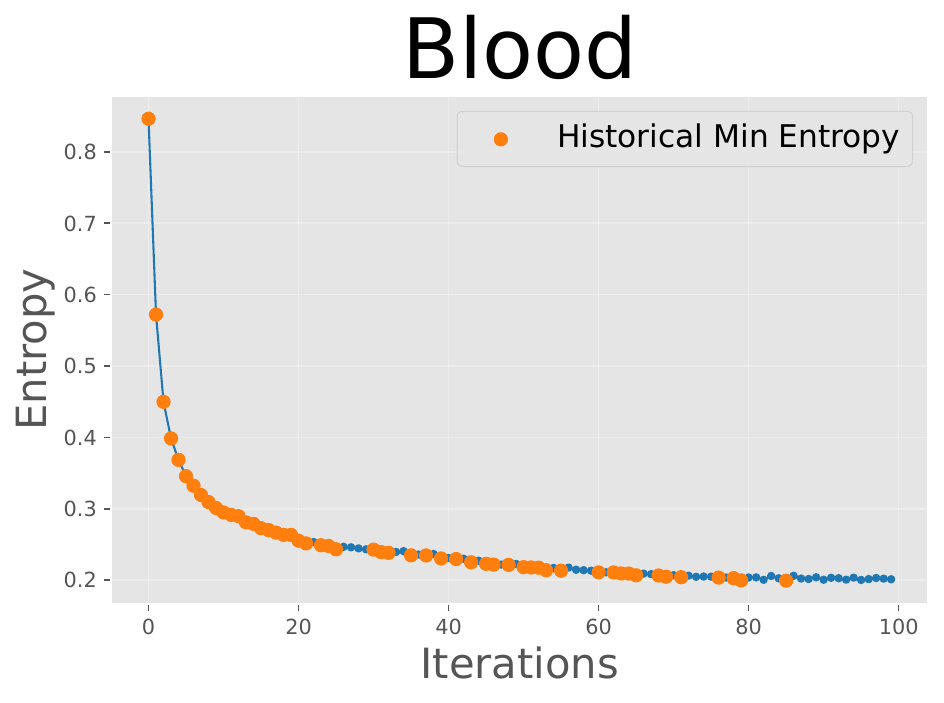}
  \end{subfigure}
\caption{Entropy reduction process. We mark the running historical lowest entropy in orange.}
\label{fig:ent}
\end{figure*}
\noindent \textbf{Discussion on the entropy reduction process.} We present the entropy reduction process across five datasets in Fig.~\ref{fig:ent}. Historical minima up to the past iterations are highlighted in orange. The results validate that the entropy reduction process is effective across all datasets. On Breast, the entropy shows larger oscillations, likely due to the limited number of instances, which amplifies predictive uncertainty. The remaining datasets exhibit a two-stage behavior, with a rapid drop in early iterations as high-uncertainty instances are corrected, followed by a slower decay as the model refines decision boundaries around hard cases. Overall, these trends demonstrate that our method consistently reduces uncertainty in the predictive distribution.

\begin{wrapfigure}{r}{0.46\textwidth}
%\vspace{-12pt}
%\begin{figure}[h]
  \centering
  \includegraphics[width=\linewidth]{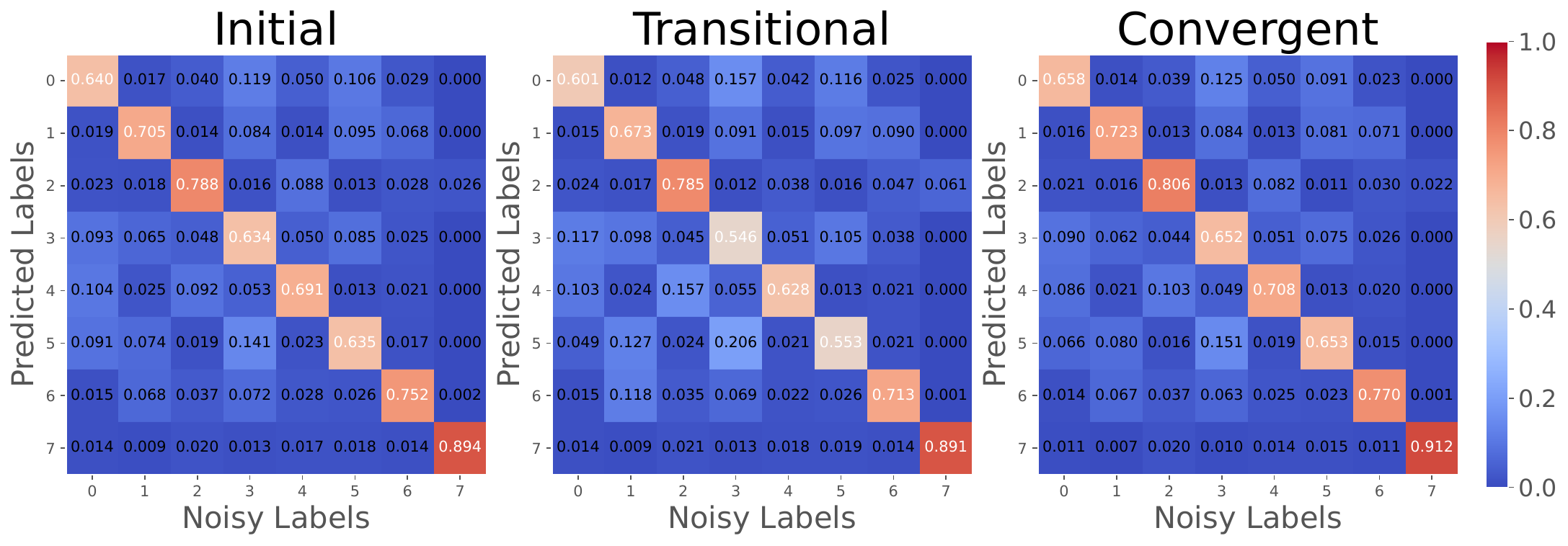}
  \caption{Example of noisy transition process (10\% symmetric noise). We present the initial state, intermediate transition state, and the final convergence of the transition matrix on Blood.}
\label{fig:tm}
%\end{figure}
\vspace{-10pt}
\end{wrapfigure}
\noindent \textbf{Visualization of noisy transition process.} 
In Fig.~\ref{fig:tm}, we visualize the evolution of the class-dependent noise transition (10\% symmetric noise) on Blood as an example. Even though early iterations contain noticeable off-diagonal mass, as the transition proceeds, diagonal entries increase while off-diagonal probabilities are compressed and become sparse, indicating progressive noise suppression. This evolution aligns with our iterative estimation of the transition matrix, which leverages high-confidence predictions to correct the noise distribution.

\section{Related Work}
\label{sec:rewk}
\textbf{Noisy-label learning.} Learning with noisy labels has been widely studied in classical deep learning models ranging from robust losses~\cite{liu2020peer,feng2022ssr,liu2022robust}, sample selection based on memorization dynamics~\cite{han2018co,chen2019understanding,wang2022scalable}, label correction~\cite{tanaka2018joint,ji2021handle,algan2022metalabelnet}, co-training or co-regularization~\cite{wei2020combating,sheng2025ca2c}, and contrastive representation learning~\cite{li2022selective,karim2022unicon,wu2020topological}, under various noise settings, including class-dependent noise~\cite{liu2015classification, patrini2017making, xia2019anchor, yao2020dual, kye2022learning, zhu2022beyond, liu2023identifiability} and instance-dependent noise~\cite{yao2023better, xia2023combating, yang2024dynamic, nguyen2024noisy, li2025learning}.
Within this research field, a central approach models class-conditional label corruption via a noise transition matrix and applies loss correction through forward or backward mapping~\cite{patrini2017making}. Early estimators depend on anchor points~\cite{liu2015classification}, where one class attains near one-hot posteriors, which is fragile in small and highly corrupted datasets. To release such a strong assumption, anchor-free estimators refine the transition iteratively using high-confidence predictions or structural regularizers~\cite{xia2019anchor, yao2020dual, li2021provably, zhang2021learning, zhu2021clusterability, yang2022estimating, cheng2022class}. Despite strong results on large datasets, many of these methods hinge on selecting high-confidence predictions, and most lack provable convergence of the transition refinement itself. Our framework follows the transition-matrix route to adaptively select pseudo-anchors with a supermartingale criterion on predictive entropy that triggers when confidence progresses monotonically, and stabilizes updates with a moving-average rule. This yields a convergence-guaranteed refinement that is well-suited to the model under the small-data setting, such as medical imaging.

\noindent \textbf{Quantum neural networks for medical image classification.}
Recent studies highlight the potential of quantum neural networks (QNNs) in small-data regimes~\cite{jia2019quantum, hur2022quantum, moussa2024hyperparameter}, such as medical imaging~\cite{mathur2021medical, landman2022quantum, rahman2025nqnn}, where obtaining high-quality annotated labels is often a persistent challenge~\cite{wei2023quantum}. Prior studies demonstrate the feasibility of medical understanding tasks through various approaches, including pure quantum circuits~\cite{yousif2024new}, quantum-classical hybrid models~\cite{mathur2021medical, ajlouni2023medical}, and quantum-inspired deep learning models~\cite{ibrahim2024medical, li2026quantum}. However, the systematic handling of label noise, a common artifact in such small data, remains critically under-explored for \qnns~\cite{ju2022improving}. To address this challenge, we propose a supermartingale-based label transition mechanism to improve loss correction, facilitating robust \qnn\ training under a label noise environment.

\section{Conclusion}
\vspace{-0.1cm}
We introduce Supermartingale-based Label Transition (SLT), an anchor-free loss-correction framework for robust \qnn\ training under noisy labels in medical imaging. SLT treats the historical decrease of predictive entropy as a supermartingale process and uses this signal to trigger stable transition-matrix refinement, avoiding reliance on hard-to-find anchor points and providing a theoretical guarantee to steady convergence. Experiments on multiple small-scale medical image datasets demonstrate that SLT improves \qnn-based classification under various label noise settings, consistently outperforming representative baselines.

A main \textbf{limitation} is the computational cost of \qnn\ simulation, a well-known bottleneck in Quantum Machine Learning (QML) rather than a specific drawback of our framework.
\textbf{Future work} will evaluate SLT under more complex real-world noise patterns and across broader medical imaging modalities. More \textbf{broadly}, our framework has the potential to enhance robust medical image analysis, particularly for smaller clinics or healthcare centers in rural areas with limited resources to obtain high-quality medical data.

{
%\small
\bibliographystyle{plainnat}
\bibliography{ref}
}

% Appendix
\newpage
\appendix
In the appendix, we first provide full proof for our theoretical analysis, and further exhibit supplementary experimental settings (e.g., noise design and model architecture) and results (e.g., update strategies). The source code will be released upon acceptance to support reproducibility.

\section{Theoretical Analysis}
\label{app:proof}
\noindent \textbf{Full proof of the theoretical analysis.}
We provide full proof of our theoretical analysis as follows.
\begin{proof}[Proof for Lemma~\ref{lem:spmg}] 
According to Def.~\ref{def:martingale}, a process $S^{(t)}$ is a supermartingale relative to $(\Omega, \mathcal{F}, P)$ if it satisfies Adaptedness, Integrability, and Supermartingale.

\noindent \textbf{Adaptedness.}
We first aim to verify that $S^{(t)}$ is determined based on the information available up to past $t$ iterations. $S^{(t)} = S^{(0)} - \sum_{\tau=1}^{t}\Delta^{(\tau)}$ is a finite subtraction of random variables that are measurable w.r.t. $\sigma\bigl(\Delta^{(1)}, \dots, \Delta^{(t)} \bigr)$. Thus, $S^{(t)}$ is also measurable w.r.t. $\mathcal{F}^{(t)}$, ensuring the adaptedness.

\noindent \textbf{Integrability.}
From Eq.~(\ref{eqn:entropy}), the normalized predictive entropy $s$ lies in $[0,1]$. Consequently, its historical minimum also resides in $[0,1]$. Hence, we obtain $\mathbb{E}[|S^{(t)}|] \leq 1 < \infty$, which guarantees integrability of $\mathbb{E}[|S^{(t)}|]$ for all $t$.

\noindent \textbf{Supermartingale.}
Let $\mathbb{E}_{\mathcal{F}^{(t-1)}}[\cdot] := \mathbb{E}[\ \cdot\!\mid\!\mathcal{F}^{(t-1)}]$. 
Since $S^{(t-1)}$ is $\mathcal{F}^{(t-1)}$-measurable, $\mathbb{E}_{\mathcal{F}^{(t-1)}}[S^{(t-1)}] = S^{(t-1)}$. The derivation is as follows:
{\small
\begin{align*}
    \mathbb{E}_{\mathcal{F}^{(t-1)}}\left[ S^{(t)} \right] 
    &= \mathbb{E}_{\mathcal{F}^{(t-1)}}\left[ S^{(t-1)} - \Delta^{(t)} \right] \\
    &= \mathbb{E}_{\mathcal{F}^{(t-1)}}\left[ S^{(t-1)} \right] - \mathbb{E}_{\mathcal{F}^{(t-1)}}\left[ \Delta^{(t)} \right] \\ 
    &= S^{(t-1)} - \mathbb{E}_{\mathcal{F}^{(t-1)}}\left[ \max(S^{(t-1)}-s^{(t)}, 0) \right] \\ 
    &\leq S^{(t-1)} - \max(\mathbb{E}_{\mathcal{F}^{(t-1)}}\left[ S^{(t-1)}-s^{(t)} \right], 0) \\
    &\leq S^{(t-1)}.
\end{align*}
}

Thus, the supermartingale condition holds true for $\forall$ $t \ge 1$ s.t.
\[
\mathbb{E}\bigl[S^{(t)} \big| \mathcal{F}^{(t-1)}\bigr] \leq S^{(t-1)}, \quad \forall\ t \ge 1.
\]
\end{proof}

\begin{proof}[Proof for \Cref{thm:convergence}]
By Thm.~\ref{thm:fct}, the limit $\lim_{t \to \infty}S^{(t)} = S^{(\infty)}$ exists and is finite. 
Given that $\{S^{(t)}\}$ is a monotonically non-increasing sequence, this indicates that the difference between two adjacent terms must almost surely approach 0,
\[
\lim_{t\to\infty}\Delta^{(t)} = \lim_{t\to\infty} \bigl( S^{(t-1)} - S^{(t)} \bigr) = 0.
\]

By Algo.~\ref{algo:slt}, the condition for updating the transition matrix ${\bf T}$ is $s^{(t)} < S^{(t-1)}$, equivalent to $\Delta^{(t)} > 0$.
As proven, $\lim_{t\to\infty}\Delta^{(t)} = 0$ (a.s.), which implies that the updates of ${\bf T}$ become asymptotically infrequent. 

The above results indicate that for any $\epsilon > 0$, there exists a positive integer $T$ such that $\forall s, t > T$,
\[
|{\bf T}^{(s)} - {\bf T}^{(t)}| < \epsilon,
\]
which implies that the sequence $\{ {\bf T}^{(t)} \}$ is a Cauchy sequence in the space of stochastic transition matrices. Since this space is complete, the sequence $\{ {\bf T}^{(t)} \}$ converges to a fixed-point matrix ${\bf T}^{*}$.
\end{proof}

\noindent \textbf{Analysis of time and space complexity.}
Time complexity per iteration is dominated by the \qnn's forward/backward propagation with parameter size $\mathcal{O}(LNR)$, which may generally take $\mathcal{O}(|X|\!\cdot\!LR\!\cdot\!2^N)$ in classical quantum simulators, while the label correction step requires a dot product between predictive probabilities and the transition matrix, which takes $\mathcal{O}(|X|K^2)$ cost. Besides, computing cross-entropy and predictive entropy takes $\mathcal{O}(|X|K)$, whereas updating the transition matrix takes $\mathcal{O}(|X|)$. Overall, the {\bf time complexity} is $\mathcal{O}(|X|\!\cdot\!LR\!\cdot\!2^N + |X|K^2)$. During training, we store the model parameters $\mathcal{O}(LNR)$, the predictive probabilities $\mathcal{O}(|X|K)$, and the transition matrix $\mathcal{O}(K^2)$. So, the {\bf space complexity} is $\mathcal{O}(LNR + |X|K + K^2)$.

\section{Supplementary Experiments}
\label{app:exp}
\noindent \textbf{Data Statistics.}
We summarize the data statistics of the MedMNIST datasets in Tab.~\ref{tab:dataset_stats}.
%\begin{wraptable}{r}{0.6\textwidth}
%\vspace{-10pt}
\begin{table}[h]
  \centering
  \setlength{\tabcolsep}{1.9pt}
  %\scriptsize
  \caption{Statistics of the MedMNIST datasets used in our experiments. All images are converted to gray scale with $28 \times 28$ size. $|X_{tr}|$, $|X_{val}|$, $|X_{te}|$, $|X|$, and $K$ denote the number of train/validation/test/total instances and classes, respectively.}
  \label{tab:dataset_stats}
  \begin{tabular}{llrrrrr}
    \toprule
    Dataset & Modality & $|X_{tr}|$ & $|X_{val}|$ & $|X_{te}|$ & $|X|$ & $K$ \\ \midrule
    \textbf{Breast} & Breast Ultrasound & 546 & 78 & 156 & 780 & 2 \\
    \textbf{Pneumonia} & Chest X-Ray & 4,708  & 524 & 624 & 5,856 & 2 \\
    \textbf{Retina} & Fundus Camera & 1,080  & 120 & 400 & 1,600 & 5 \\
    \textbf{Derma} & Dermatoscope & 7,007  & 1,003 & 2,005 & 10,015 & 7 \\
    \textbf{Blood} & Microscope & 11,959 & 1,712 & 3,421 & 17,092 & 8 \\ 
    \bottomrule
  \end{tabular}
\end{table}
%\vspace{-10pt}
%\end{wraptable}

%\begin{wraptable}{r}{0.5\textwidth}
%\vspace{-10pt}
\begin{table}[h]
\centering
\small
\setlength{\tabcolsep}{2pt}
\caption{Mapping rubric for the custom-mapping noise. Each class is paired with a clinically or visually confusable counterpart.}
\label{tab:cm_design}
  \begin{tabular}{lll}
  \toprule
  \textbf{Dataset} & \textbf{Original Classes} & \textbf{Noisy-paired Classes} \\
  \midrule
  \multirow{2}{*}{\textbf{Breast}} & 0 (malignant) & 1 (benign) \\
  & 1 (benign) & 0 (malignant) \\
  \midrule
  \multirow{2}{*}{\textbf{Pneumonia}} & 0 (normal) & 1 (pneumonia) \\
  & 1 (pneumonia) & 0 (normal) \\
  \midrule
  \multirow{5}{*}{\textbf{Retina}} & 0 (no DR) & 1 (mild NPDR) \\
  & 1 (mild NPDR) & 0 (no DR) \\
  & 2 (moderate NPDR) & 3 (severe NPDR) \\
  & 3 (severe NPDR) & 2 (moderate NPDR) \\
  & 4 (PDR) & 3 (severe NPDR) \\
  \midrule
  \multirow{7}{*}{\textbf{Derma}} & 0 (AKIEC) & 1 (BCC) \\
  & 1 (BCC) & 0 (AKIEC) \\
  & 2 (BKL) & 5 (NV) \\
  & 3 (DF) & 2 (BKL) \\
  & 4 (MEL) & 5 (NV) \\
  & 5 (NV) & 4 (MEL) \\
  & 6 (VASC) & 1 (BCC) \\
  \midrule
  \multirow{8}{*}{\textbf{Blood}}
 & 0 (basophil) & 1 (eosinophil) \\
 & 1 (eosinophil) & 0 (basophil) \\
 & 2 (erythroblast) & 3 (immature granulocytes) \\
 & 3 (immature granulocytes) & 6 (neutrophil) \\
 & 4 (lymphocyte) & 5 (monocyte) \\
 & 5 (monocyte) & 4 (lymphocyte) \\
 & 6 (neutrophil) & 3 (immature granulocytes) \\
 & 7 (platelet) & 2 (erythroblast) \\
 \bottomrule
 \end{tabular}
\end{table}
%\vspace{-10pt}
%\end{wraptable}
\noindent \textbf{Noise implementation and design.}
We follow the setting in~\textbf{Forward}~\cite{patrini2017making} to construct uniform (UN) and pair-wise (Cyclic-flipping, UF) noise. Furthermore, we design a new pair-wise noise (Custom-mapping, CM) and justify the design, whose mapping rubric is provided in Tab.~\ref{tab:cm_design}. For the binary-class datasets (\textbf{Breast} and \textbf{Pneumonia}), we use a standard pair-wise mapping between the two classes. For the complex multi-class datasets, the pairings are designed to simulate realistic, clinically-plausible diagnostic errors.

For \textbf{Retina}, we leverage the ordinal and progressive nature of diabetic retinopathy (DR) classification. The dataset comprises five clinically defined stages of DR: no apparent retinopathy, mild non-proliferative DR (NPDR), moderate NPDR, severe NPDR, and proliferative DR (PDR). Misdiagnoses in real-world settings often occur between adjacent stages due to overlapping retinal features such as microaneurysms, hemorrhages, and neovascularization, especially under suboptimal imaging conditions. To simulate such realistic diagnostic ambiguities, we construct a class-conditional pair noise schema wherein each DR stage is paired with its immediate neighbor. Specifically, ``no DR'' is paired with ``mild NPDR'', ``moderate NPDR'' with ``severe NPDR'', and ``severe NPDR'' with ``PDR''. This setup reflects both clinical practice and morphological continuity.

For \textbf{Derma}, we construct a pairwise noise mapping based on dermatological similarity, visual confusion, and well-documented misdiagnosis patterns in dermoscopy. Each class is mapped to a single ``confusable'' counterpart, capturing asymmetric mislabeling commonly observed in medical practice. For instance, actinic keratoses and intraepithelial carcinoma (AKIEC) are frequently mistaken for basal cell carcinoma (BCC) due to overlapping visual features such as keratinization and pigmentation. Similarly, melanoma (MEL) and melanocytic nevi (NV) represent a classic diagnostic challenge, as their visual boundaries and color distributions often overlap, leading to bidirectional confusion. Benign keratosis-like lesions (BKL) can appear similar to nevi, while dermatofibroma (DF) may be misinterpreted as BKL due to shared dermal structures. Vascular lesions (VASC), characterized by reddish nodules, may occasionally mimic the appearance of BCC in dermoscopic images.

For \textbf{Blood}, each cell type is paired with another class that it is most likely to be confused with due to visual resemblance under light microscopy, particularly in low-resolution or poorly stained images. For instance, basophils and eosinophils (classes 0 and 1) are both granulocytes with similar nuclear granulation, making them difficult to differentiate. Erythroblasts (class 2), which are immature red blood cells, share size and chromatin features with immature granulocytes (class 3), especially in early developmental stages. Likewise, lymphocytes and monocytes (classes 4 and 5) are frequently misclassified due to overlapping sizes and nuclear shapes. Finally, platelets (class 7), being small and dense, may be mistaken for erythroblastic fragments under certain imaging conditions.

\noindent \textbf{Implementation details and hyperparameters for baselines and competing methods.}
We follow the overall setting in~\textbf{Forward}~\cite{patrini2017making} to build the initial transition matrix. The key difference is that we train a two-layer shallow neural network on the training data without relying on anchor points.
To ensure a fair comparison, we implement all baselines and competing methods using the same backbone \qnn\ and optimizer with an identical batch size (128) and learning rate (0.01). Specifically, for \textbf{VolMinNet}~\cite{li2021provably}, we set the regularization coefficient to $10^{-3}$. For \textbf{TVR}~\cite{zhang2021learning}, we set the regularization coefficient $\gamma = 0.1$ and the beta distribution parameters to $[0.999, 0.01]$. For \textbf{BLTM}~\cite{yang2022estimating}, the upper bound threshold $\rho_{\max}$ is set to $0.6$. For \textbf{CCR}~\cite{cheng2022class}, the weight of the consistency regularization term is set to $\lambda=0.3$.
For competing methods validated on CheXpert, we conducted a grid search to fine-tune the key hyperparameters and, for the rest, followed the same settings as those in the original papers.
For \textbf{DivideMix}~\cite{lidividemix}, we limited the warm-up period to 10 epochs to accommodate the smaller dataset, while fixing the Beta parameter, probability threshold, and sharpening temperature all at 0.5. In \textbf{ELR}~\cite{liu2020early}, we applied a regularization weight of 3.0 and set the temporal ensembling momentum to 0.7. The \textbf{VisualCheXbert}~\cite{jain2021visualchexbert} was set up to utilize both label smoothing and KL-divergence. Regarding \textbf{BoMD}~\cite{chen2023bomd}, we adopted a projection dimension of 128, maintained a memory bank queue size of 8192, and set the temperature parameter $\tau$ to 0.07. Finally, for \textbf{FixMatch}~\cite{ihler2024distribution}, we established a high confidence threshold of 0.95, assigned an unlabeled loss weight of 1.0, and used an unlabeled data ratio of 5.

\begin{wrapfigure}{r}{0.5\textwidth}
\vspace{-10pt}
%\begin{figure}[h]
\centering
  \includegraphics[width=\linewidth]{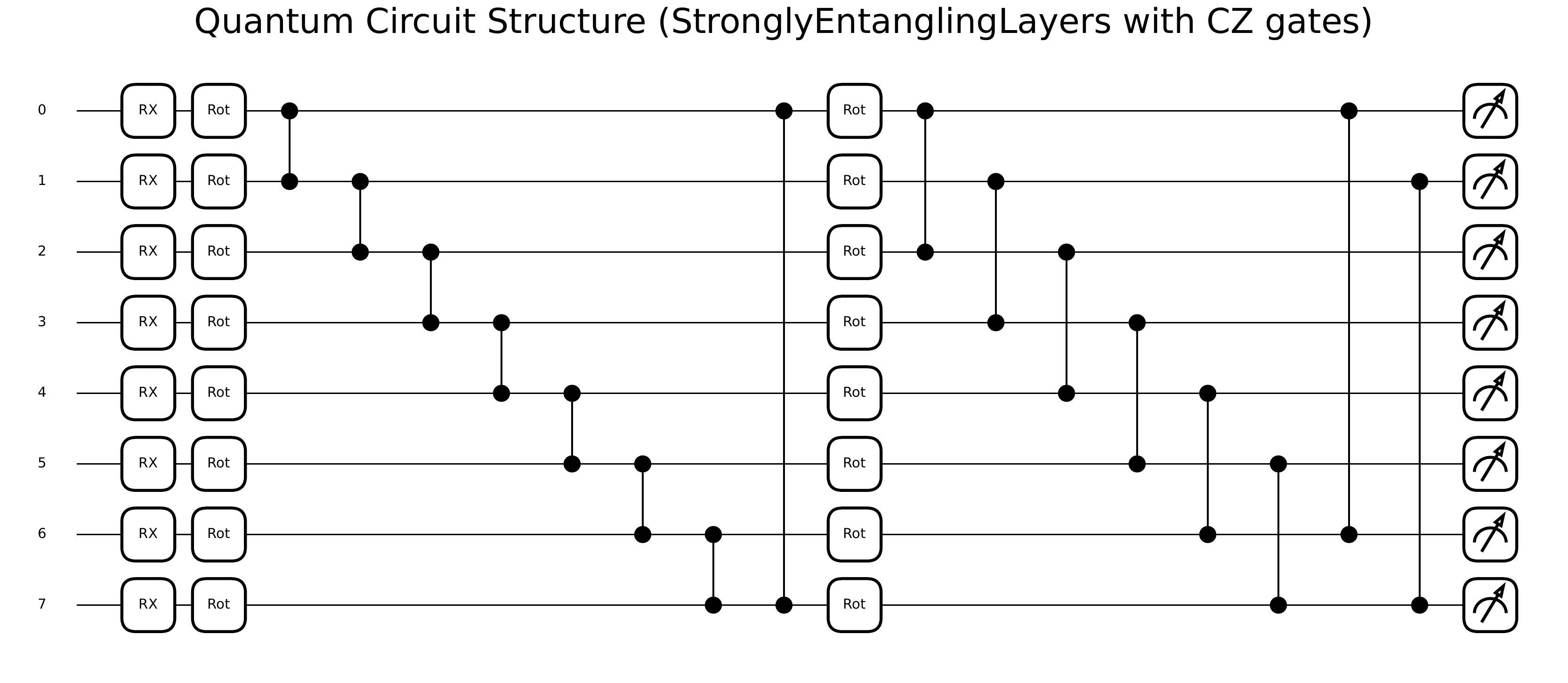}
  \caption{Model architecture of our backbone quantum circuit (with 8 qubits, 2 layers, and 3 rotation gates) using PennyLane's `\textit{qml.StronglyEntanglingLayers}'.}
\label{fig:vqc}
%\end{figure}
\vspace{-10pt}
\end{wrapfigure}
\noindent \textbf{Model architecture of our backbone \qnns.}
In this study, we evaluate our framework using a \qnn\ as our backbone model. Unlike a standalone quantum circuit, our model incorporates classical pre-processing and post-processing layers to enhance learnability. First, high-dimensional classical inputs are compressed via a fully connected layer and normalized using a hyperbolic tangent ($\tanh$) activation function. This step ensures that input values remain bounded, preventing the rotation angles in the subsequent quantum embedding from diverging. The processed features are then encoded into quantum states via angle encoding using $R_X$ rotations. The variational ansatz employs \textit{StronglyEntanglingLayers}, which consists of parameterized single-qubit rotations and entangling $CZ$ gates (controlled-phase) rather than standard CNOT gates. Finally, the expectation values of the Pauli-Z operators are measured across all qubits and mapped to the final output classes through a second classical fully connected layer.

\begin{table}[h]
\centering
\caption{Comparison with baselines across five datasets under 10\% instance-dependent noise (IDN) by mean(std) of F1 (\%) over five runs. The best results are highlighted in bold.}
\label{tab:idn}
\resizebox{\linewidth}{!}{
%\footnotesize
%\setlength{\tabcolsep}{1pt}
\begin{tabular}{lccccc}
\toprule
Models & Breast & Pneumonia & Retina & Derma & Blood \\
\midrule
DISC & 65.71 (2.11) & 83.36 (2.42) & 27.08 (2.27) & 25.94 (1.73) & 66.88 (1.08) \\
ProMix & 66.34 (2.03) & 84.19 (2.08) & 28.21 (2.01) & 26.83 (1.61) & 67.57 (0.98) \\
InstanceGM & 65.88 (1.94) & 83.74 (1.97) & 28.76 (2.12) & 27.19 (1.56) & 67.96 (1.03) \\
$\pi$-LR & 66.02 (1.86) & 83.87 (1.89) & 28.94 (2.05) & 27.37 (1.42) & 68.06 (0.95) \\
Ours & \textbf{69.07 (2.32)} & \textbf{86.15 (1.72)} & \textbf{33.53 (2.80)} & \textbf{28.67 (1.50)} & \textbf{69.25 (0.95)} \\
\bottomrule
\end{tabular}
}
\end{table}
\noindent \textbf{Validation under instance-dependent noise (IDN).}
To validate the effectiveness of our method under IDN, we follow the benchmark setup~\cite{ma2025benchmarking}, which in turn follows~\cite{xia2020part}, to construct instance-dependent noise and compare our method with the following leading baselines in the 10\% IDN setting. We finetune the hyperparameters on validation sets and choose the setting of 0.4 $\eta$, 0.5 del, and 20 pat. For fair comparison, we use the same backbone model and optimization hyperparameters.
\textbf{DISC}~\cite{li2023disc} dynamically selects reliable instances and corrects suspicious labels during training in an instance-specific manner.
\textbf{ProMix}~\cite{xiao2023promix} combats label noise by maximizing the utility of clean samples while reducing the harmful effect of corrupted ones. We set the threshold as 0.9.
\textbf{InstanceGM}~\cite{garg2023instance} models instance-dependent label corruption with graphical modeling. We set the threshold as 0.5, and $\alpha=4$, $\lambda_u=22$, $T=0.5$.
\textbf{$\pi$-LR}~\cite{he2025probabilistic} models each sample's confusing probability and refines labels probabilistically to handle IDN. We set both elr and consistency ratio as 0.75, $\lambda=0.3$.
As shown in Tab.~\ref{tab:idn}, the results verify that our method achieves robust performance under IDN settings and outperforms these baselines.

\begin{wraptable}{r}{0.5\textwidth}
\vspace{-13pt}
%\begin{table}[h]
\centering
\caption{Comparison of competing methods on real-world noisy labeled dataset, CheXpert. We report the mean AUC (\%) and F1 score (\%) $\pm$ standard deviation over five independent runs. The best results are highlighted in \textbf{bold}.}
\label{tab:CheXpert}
\begin{tabular}{lcc}
\toprule
\textbf{Methods} & \textbf{AUC (\%)} & \textbf{F1 (\%)} \\
\midrule
\textbf{CE (U-Zero)} & 87.10 (0.15) & 64.20 (0.05) \\
\textbf{DivideMix} & 87.88 (0.22) & 65.85 (0.15) \\
\textbf{VisualCheXbert} & 88.11 (0.18) & 66.22 (0.04) \\
\textbf{ELR} & 87.52 (0.20) & 65.08 (0.09) \\
\textbf{BoMD} & 88.47 (0.12) & 67.55 (0.03) \\
\textbf{FixMatch} & 88.03 (0.25) & 66.59 (0.16) \\
\textbf{Ours} & \textbf{89.85 (0.10)} & \textbf{68.94 (0.02)} \\
\bottomrule
\end{tabular}
%\end{table}
\vspace{-8pt}
\end{wraptable}
\noindent \textbf{Validation on CheXpert, a real-world, noisy labeled dataset.}
We evaluate our proposed method on the real-world, noisy labeled chest radiograph dataset, CheXpert~\cite{irvin2019chexpert}, which consists of 224,316 images collected from 65,240 patients at Stanford Hospital. Labels for the training set were automatically extracted from radiology reports using a natural language processing (NLP) labeler, resulting in four categories: positive, negative, uncertain, and unmentioned. Following standard evaluation protocols, we focus on the five competition pathologies: Atelectasis, Cardiomegaly, Consolidation, Edema, and Pleural Effusion. For performance assessment, we utilize the official test set, comprising 500 studies annotated by the consensus of five board-certified radiologists, which serves as the clean ground truth. To simulate the data scarcity, we employ a stratified sampling strategy to sample 10\% of the training set. Also, we apply the U-Zeros approach to assign the uncertain labels. We finetune the hyperparameters on validation sets and choose the setting of 0.4 $\eta$, 0.5 del, and 15 pat. In experiments, we compute the average AUC and F1 of the above five competition pathologies, and repeat experiments five times. The results are reported in mean (std).
We compare our method with both generic noisy-label learning baselines and domain-specific medical imaging methods, which are widely applied on the CheXpert dataset.
\textbf{DivideMix}~\cite{lidividemix} is a semi-supervised framework that uses a Gaussian Mixture Model to dynamically divide samples into labeled (clean) and unlabeled (noisy) sets for co-training.
\textbf{VisualCheXbert}~\cite{jain2021visualchexbert} is a domain-specific method that utilizes a BERT-based text encoder to extract high-quality soft labels from radiology reports to supervise the image classifier.
\textbf{ELR}~\cite{liu2020early} is a regularization technique that leverages the ``early learning'' phenomenon to prevent deep networks from memorizing false labels by enforcing prediction consistency.
\textbf{BoMD}~\cite{chen2023bomd} is a state-of-the-art medical method that employs text-driven multi-label descriptors to guide the image model in learning distinct pathological features under noise.
\textbf{FixMatch}~\cite{ihler2024distribution} is a semi-supervised learning method that enforces consistency between the model's predictions on weakly-augmented and strongly-augmented versions of the same image.

As presented in Tab.~\ref{tab:CheXpert}, our proposed method achieves a new state-of-the-art performance on the CheXpert dataset, outperforming both generic noisy-label learning baselines and domain-specific medical imaging methods. Specifically, our method reaches an AUC of 89.85\% and an F1 score of 68.94\%, surpassing the previous strongest competitor, BoMD. It is worth noting that while general noisy-label methods like DivideMix and ELR provide improvements over the standard Cross-Entropy baseline (87.88\% and 87.52\% vs. 87.10\% in AUC), their gains are limited compared to our approach. This suggests that general noise-label learning strategies developed for natural images (e.g., CIFAR) may struggle under data scarcity. The results validate that our methods work effectively on real-world, noisy labeled datasets.

\begin{wraptable}{r}{0.3\textwidth}
\vspace{-12pt}
%\begin{table}[h]
\centering
\small
\caption{Model size analysis.}
\label{tab:msize}
  \begin{tabular}{lcc}
    \toprule
    \textbf{Model} & \bm{$|\theta|$} & \textbf{Scale} \\
    \midrule
    SNN & $2{,}146$ & $3.66\times$ \\
    DNN & $23{,}266$ & $39.70\times$ \\
    \qnn\ & $586$ & $1\times$ \\
    \bottomrule
  \end{tabular}
%\end{table}
\vspace{-10pt}
\end{wraptable}
\noindent \textbf{Analysis of trainable parameters.}
We compare the number of trainable parameters among the backbone models, SNN, DNN, and \qnn. As shown in Tab.~\ref{tab:msize}, \qnn\ contains only $586$ trainable parameters when $K=2$ (the scale is normalized by \qnn). In contrast, the shallow classical model contains $2{,}146$ parameters, which is about $3.66\times$ larger than ours, while the deep classical model with 10 residual blocks contains $23{,}266$ parameters, about $39.70\times$ larger. This substantial reduction in model size suggests that our quantum model provides a highly parameter-efficient alternative, achieving a compact architecture while retaining expressive modeling capacity.

\begin{wrapfigure}{r}{0.52\textwidth}
\vspace{-10pt}
%\begin{figure}[h]
  \centering
  \includegraphics[width=\linewidth]{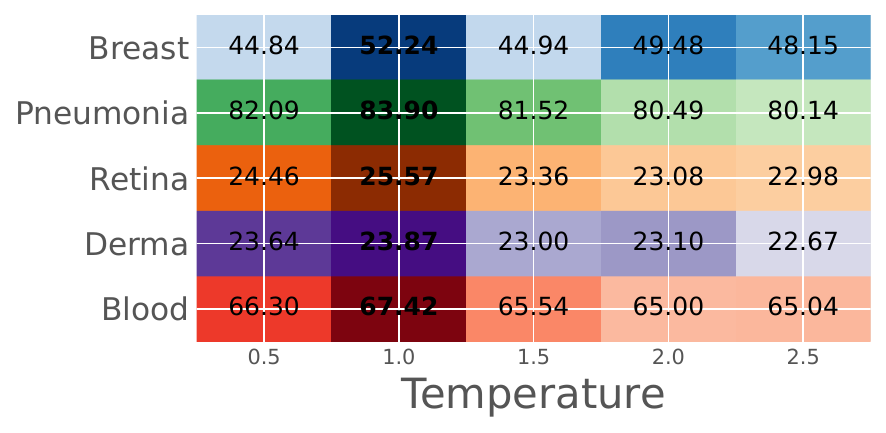}
  \caption{An Analysis of the impact of smoothness on \qnns\ across five datasets under 50\% uniform noise. We apply temperature scaling to the output of \qnns\ before utilizing SLT to simulate different scales of smoothness and report the F1 score (\%).}
\label{fig:temp}
%\end{figure}
\vspace{-10pt}
\end{wrapfigure}
\noindent \textbf{Analysis of smoothness in \qnns.}
To further examine how the smoothness affects \qnn, we analyze the F1 scores of the variant ``QNN+TS+SLT'' under 50\% uniform noise across different temperature values. In Fig.~\ref{fig:temp}, we show a row-wise normalized heatmap, where each row corresponds to a dataset and each column corresponds to a temperature value; within each row, darker cells indicate better F1 for that dataset. A clear and consistent trend emerges across all five datasets: our method achieves the best F1 at Temp = 1.0, namely, when no temperature scaling is applied to the \qnn. This observation suggests that the intrinsic smoothness of \qnns\ already mitigates the adverse effects of overconfident predictions, making additional output-scale adjustment unnecessary. In turn, this provides further evidence for the advantage of using \qnns\ as the backbone model in our framework.

\begin{figure*}[h]
 \centering
 \includegraphics[width=0.99\textwidth]{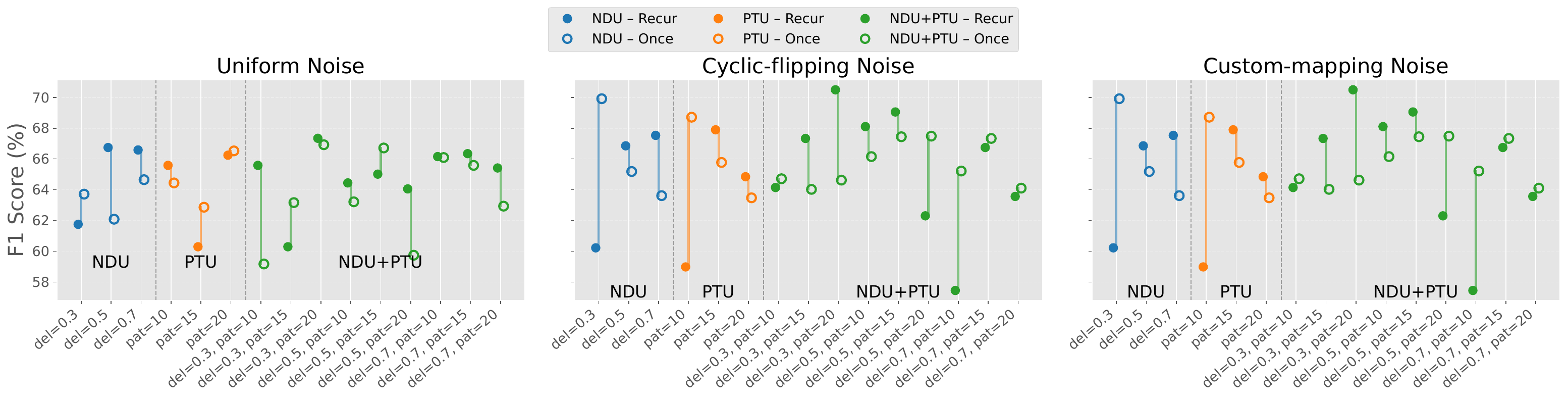}
\caption{Analysis of update strategies (NDU, PTU, and NDU+PTU) under two scenarios (Recur and Once) for the transition matrix under three noise types (Uniform, Cyclic-flipping, and Custom-mapping) with a 10\% noise ratio on Breast. ``NDU'' delays the transition matrix update until a specified fraction (``del'') of total training iterations is completed. ``PTU'' triggers an update only after performance stagnates (e.g., no F1 increase or entropy decrease) for ``pat'' consecutive rounds. ``NDU+PTU'' signifies that both strategies are implemented concurrently. ``Recur'' and ``Once'' denote whether or not the above strategies are applied repeatedly whenever their conditions are met. The vertical line indicates the gap in the F1 score under two scenarios.}
\label{fig:update}
\end{figure*}

\begin{table*}[t]
\centering
\newcommand{\std}[1]{{\footnotesize\ (#1)}}
\caption{Comparison with baselines (CE, Forward) and competing methods across Derma and Blood, under three noise settings (UN: Uniform, CF: Cyclic-flipping, CM: Custom-mapping) at noise levels 10\%/30\%/50\%. We report mean(std) of F1~(\%) over five runs.}
\label{tab:compare2}
\footnotesize
\setlength{\tabcolsep}{1.5pt}
\begin{tabular}{c|c|ccc|ccc}
\toprule
\multirow{2}{*}{\textbf{Noise}} & \multirow{2}{*}{\textbf{Models}}
& \multicolumn{3}{c|}{\textbf{Derma}}
& \multicolumn{3}{c}{\textbf{Blood}} \\
\cmidrule(lr){3-5}\cmidrule(l){6-8}
 & & 10\% & 30\% & 50\% & 10\% & 30\% & 50\% \\
\midrule
\multirow{9}{*}{\textbf{UN}}
 & CE & 24.01\std{1.25} & 21.08\std{2.32} & 19.82\std{2.98} & 68.78\std{0.77} & 67.24\std{0.61} & 65.29\std{0.58}\\
 & Forward & 27.13\std{1.33} & 25.93\std{1.75} & 22.94\std{1.80} & 69.03\std{0.93} & 68.56\std{0.70} & 66.74\std{0.65}\\
 & T-Revision & 16.44\std{3.11} & 16.69\std{4.37} & 14.61\std{3.83} & 64.37\std{5.97} & 59.91\std{6.67} & 57.91\std{6.68}\\
 & Dual-T & 27.28\std{0.74} & 25.03\std{1.67} & 22.75\std{1.79} & 69.13\std{0.88} & 63.78\std{3.90} & 59.15\std{3.86}\\
 & VolMinNet & 26.09\std{1.94} & 24.89\std{2.16} & 21.94\std{2.57} & 68.47\std{1.89} & 62.41\std{4.62} & 57.23\std{4.39}\\
 & TVR & 17.87\std{2.75} & 16.06\std{4.47} & 11.66\std{3.67} & 53.18\std{8.08} & 46.83\std{8.99} & 39.42\std{9.56}\\
 & BLTM & 19.56\std{1.77} & 18.59\std{1.57} & 16.90\std{3.21} & 68.61\std{1.38} & 64.98\std{2.61} & 57.51\std{3.65}\\
 & CCR & 25.32\std{0.74} & 25.06\std{0.43} & 22.86\std{1.99} & 68.62\std{0.67} & 67.87\std{1.58} & 66.82\std{1.09}\\
 & Ours & \textbf{29.24\std{1.71}} & \textbf{26.95\std{1.34}} & \textbf{23.87\std{1.56}} & \textbf{69.88\std{0.63}} & \textbf{69.14\std{1.26}} & \textbf{67.42\std{1.51}}\\
\midrule
\multirow{9}{*}{\textbf{CF}}
 & CE & 21.29\std{2.34} & 19.77\std{2.02} & 15.59\std{3.86} & 68.10\std{0.89} & 66.10\std{1.15} & 38.97\std{4.12}\\
 & Forward & 26.56\std{1.67} & 24.37\std{1.91} & 15.90\std{2.38} & 70.17\std{0.97} & 68.49\std{1.39} & 46.38\std{3.96}\\
 & T-Revision & 21.52\std{3.48} & 18.88\std{4.02} & 14.39\std{4.57} & 64.24\std{5.18} & 54.76\std{4.49} & 44.53\std{4.10}\\
 & Dual-T & 26.66\std{1.55} & 24.75\std{1.81} & 15.40\std{1.38} & 70.14\std{0.46} & 67.63\std{3.65} & 50.74\std{6.42}\\
 & VolMinNet & 26.96\std{1.89} & 23.73\std{2.62} & 14.87\std{0.78} & 69.57\std{0.19} & 68.27\std{3.61} & 48.98\std{5.01}\\
 & TVR & 20.46\std{4.13} & 15.32\std{2.78} & 13.33\std{4.34} & 59.74\std{8.08} & 55.01\std{6.58} & 42.58\std{5.72}\\
 & BLTM & 22.62\std{2.73} & 19.68\std{2.19} & 16.19\std{2.15} & 67.71\std{0.85} & 58.96\std{7.47} & 42.93\std{5.24}\\
 & CCR & 25.00\std{1.03} & 23.94\std{1.62} & 16.82\std{2.64} & 69.09\std{1.38} & 68.30\std{2.84} & 48.41\std{5.33}\\
 & Ours & \textbf{27.69\std{1.52}} & \textbf{25.89\std{1.33}} & \textbf{18.04\std{2.12}} & \textbf{71.81\std{0.95}} & \textbf{69.52\std{0.86}} & \textbf{51.69\std{3.47}}\\
\midrule
\multirow{9}{*}{\textbf{CM}}
 & CE & 20.44\std{2.58} & 19.29\std{2.40} & 15.14\std{3.97} & 69.84\std{0.89} & 66.58\std{0.77} & 40.06\std{2.24}\\
 & Forward & 26.06\std{1.26} & 22.72\std{1.35} & 16.84\std{3.47} & 69.92\std{0.70} & 69.37\std{0.28} & 49.80\std{1.79}\\
 & T-Revision & 20.55\std{2.70} & 18.74\std{3.71} & 16.15\std{3.16} & 66.99\std{3.23} & 54.70\std{1.43} & 40.67\std{4.41}\\
 & Dual-T & 25.06\std{1.56} & 21.72\std{1.21} & 15.93\std{2.75} & 69.03\std{0.68} & 65.02\std{5.76} & 39.02\std{7.78}\\
 & VolMinNet & 25.21\std{0.03} & 20.94\std{2.27} & 14.34\std{3.91} & 69.65\std{1.69} & 63.96\std{4.59} & 40.78\std{5.27}\\
 & TVR & 20.25\std{3.98} & 16.09\std{4.36} & 12.83\std{4.40} & 57.20\std{4.07} & 55.19\std{5.14} & 44.85\std{6.52}\\
 & BLTM & 21.24\std{1.33} & 18.70\std{2.78} & 17.86\std{1.67} & 69.04\std{0.98} & 57.77\std{7.11} & 46.59\std{6.18}\\
 & CCR & 24.86\std{0.86} & 23.10\std{1.86} & 17.25\std{2.84} & 69.67\std{0.68} & 68.27\std{0.69} & 50.34\std{5.54}\\
 & Ours & \textbf{27.51\std{1.92}} & \textbf{23.57\std{1.68}} & \textbf{18.12\std{3.15}} & \textbf{70.56\std{0.97}} & \textbf{69.78\std{0.53}} & \textbf{51.30\std{3.68}}\\
\bottomrule
\end{tabular}
\end{table*}

\begin{wraptable}{r}{0.5\textwidth}
%\vspace{-10pt}
%\begin{table}[h]
\centering
\small
\setlength{\tabcolsep}{3pt}
\caption{Optimal hyperparameters, $\eta$, del, and pat, per dataset and noise type. `UN', `CF', and `CM' denote the noise types of Uniform, Cyclic-flipping, and Custom-mapping, respectively. We adopt the update mode ``NDU+PTU'' for all datasets.}
\label{tab:hyperparams}
 \begin{tabular}{l|ccc|ccc|ccc}
 \toprule
  & \multicolumn{3}{c|}{\textbf{UN}} & \multicolumn{3}{c|}{\textbf{CF}} & \multicolumn{3}{c}{\textbf{CM}} \\
 \cmidrule(lr){2-10}
 & $\eta$ & del & pat & $\eta$ & del & pat & $\eta$ & del & pat \\
 \midrule
 \textbf{Breast}    & 0.6 & 0.3 & 20 & 0.9 & 0.3 & 20 & 0.9 & 0.3 & 20 \\
 \textbf{Pneumonia} & 0.1 & 0.7 & 20 & 1.0 & 0.3 & 20 & 1.0 & 0.3 & 20 \\
 \textbf{Retina}    & 0.1 & 0.3 & 20 & 0.2 & 0.3 & 20 & 0.1 & 0.7 & 15 \\
 \textbf{Derma}     & 1.0 & 0.7 & 10 & 0.3 & 0.3 & 10 & 0.4 & 0.7 & 20 \\
 \textbf{Blood}     & 0.4 & 0.5 & 15 & 0.3 & 0.5 & 15 & 0.4 & 0.7 & 20 \\
 \bottomrule
\end{tabular}
%\end{table}
\vspace{-8pt}
\end{wraptable}
\noindent \textbf{Analysis of update strategies for the transition matrix.} Because frequent updates can destabilize the transition, we examine how different update strategies affect model performance on validation sets. We report results on the Breast dataset and present the remaining results in Fig.~\ref{fig:update2}. As shown in Fig.~\ref{fig:update}, repeatedly executing the strategy (``Recur'') clearly outperforms a single execution (``Once''). The combined strategy ``NDU+PTU'' is the most robust, yielding the best F1 with smaller variance across hyperparameters. Based on these observations, we adopt ``NDU+PTU'' and apply it recurrently during training. 
Overall, based on the analysis of $\eta$, del, and pat, we observe that while SLT benefits from parameter tuning to achieve higher F1 scores, the performance degradation with suboptimal parameters is mild. This demonstrates the framework's robustness to variations in hyperparameter settings.
We summarize the optimal hyperparameters in Tab.~\ref{tab:hyperparams}.

\noindent \textbf{Discussion on failure cases in trade-off analysis.}
Loss-correction approaches depend heavily on an accurately estimated initial transition matrix. Specifically, this reliance necessitates that the backbone model's predictions adhere to a closed-set assumption. In our experiments, we observed that employing an insufficient number of qubits leads to failure in constructing the initial transition matrix. It is important to note, however, that this is not a unique limitation inherent to our proposed framework.

\noindent \textbf{Additional Comparison.}
We present the rest of the comparison results in Tab.~\ref{tab:compare2}.

\noindent \textbf{Hardware and software.}
Our experimental environment consisted of an Ubuntu 22.04.3 LTS system powered by an Intel Xeon w5-3433 CPU and a 48GB NVIDIA RTX A6000 GPU. The software stack included Python 3.11.8, PyTorch 2.2.2, and Pennylane 0.35.1.
The time required for the model training (single run) ranges from 25s (smallest dataset) to 582s (largest dataset).
% The experiment is conducted on a server with the following settings:
% \begin{itemize}%[itemsep=-1mm]
%   \item Operating System: Ubuntu 22.04.3 LTS
%   \item CPU: Intel Xeon w5-3433 @ 4.20 GHz
%   \item GPU: NVIDIA RTX A6000 48GB
%   \item Software: Python 3.11.8, PyTorch 2.2.2, Pennylane 0.35.1.
% \end{itemize}

\noindent \textbf{LLM usage.}
We employ AI tools, e.g., LLMs, to polish the clarity.

\begin{figure*}[h]
  \begin{subfigure}{1.0\linewidth}
    \centering
    \includegraphics[width=0.99\textwidth]{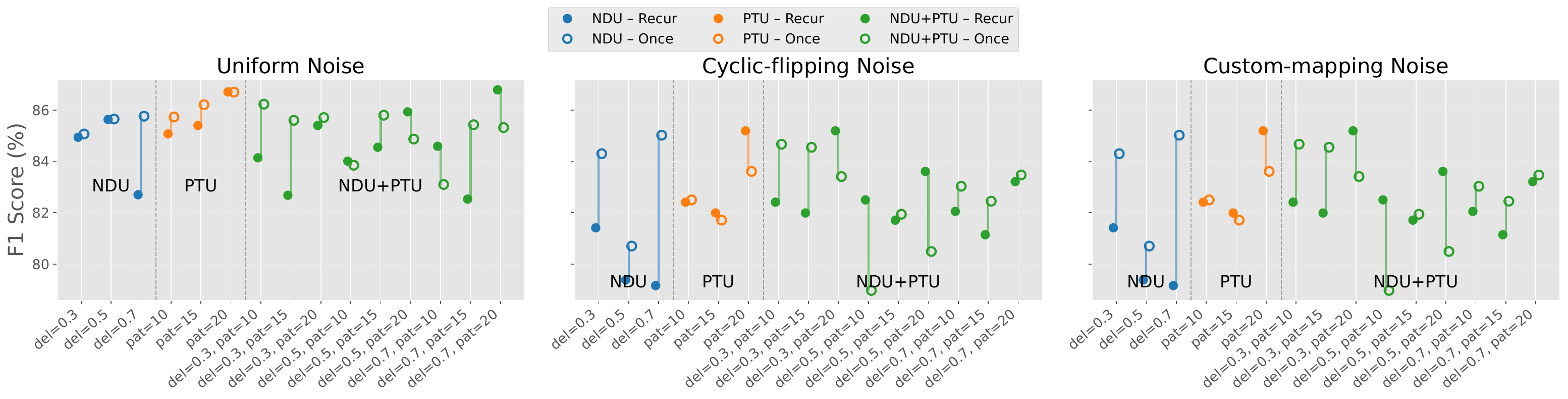}
    \includegraphics[width=0.99\textwidth]{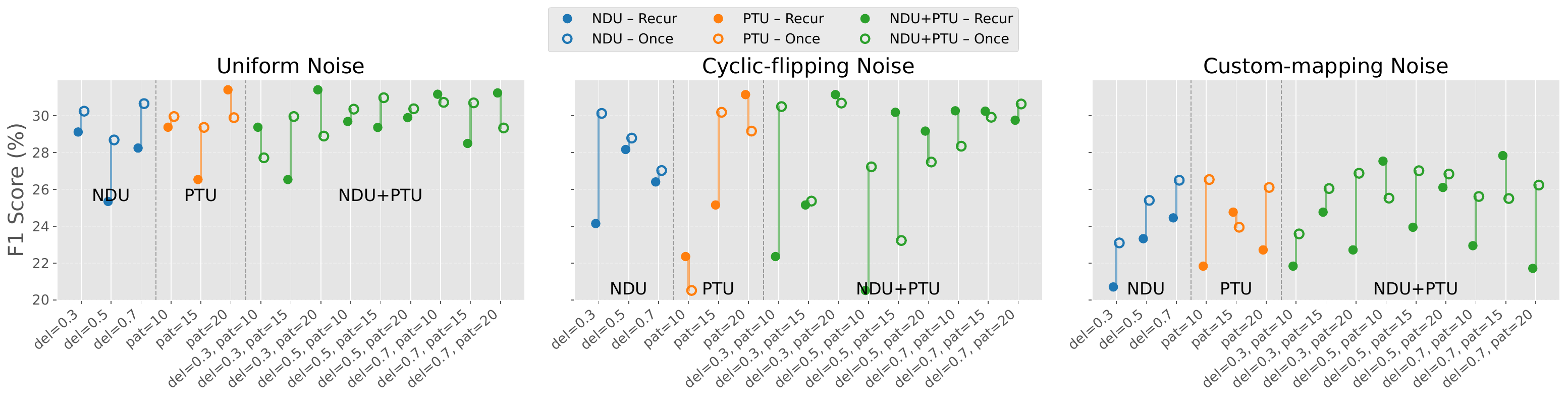}
    \includegraphics[width=0.99\textwidth]{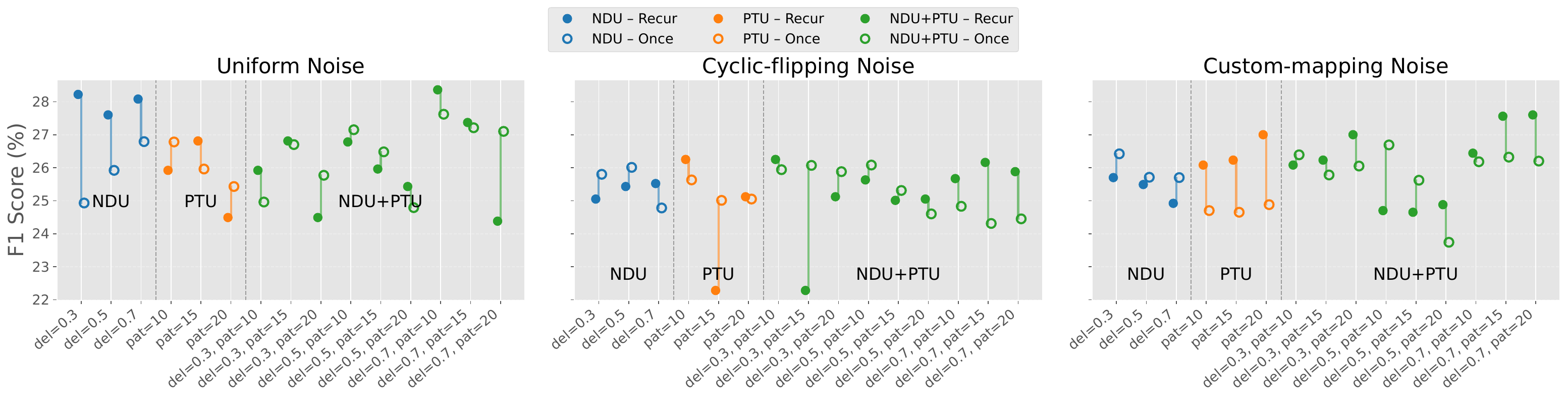}
    \includegraphics[width=0.99\textwidth]{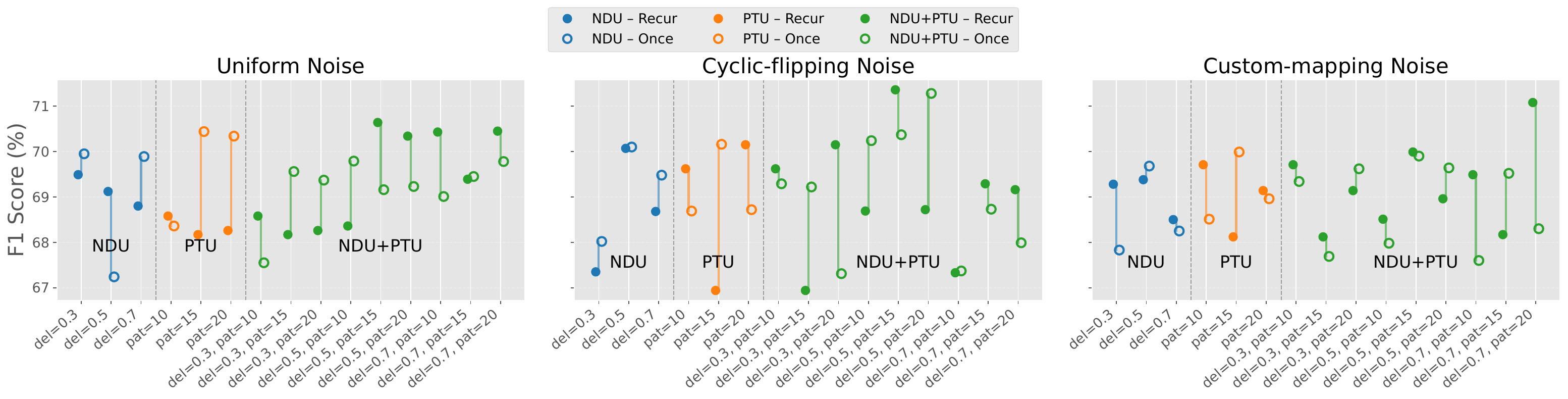}
  \end{subfigure}
\caption{Analysis of update strategies (NDU, PTU, and NDU+PTU) under two scenarios (Recur and Once) for the transition matrix under three noise types (Uniform, Cyclic-flipping, and Custom-mapping) across four datasets in order (Pneumonia, Retina, Derma, and Blood). ``NDU'' delays the transition matrix update until a specified fraction (``del'') of total training iterations is completed. ``PTU'' triggers an update only after performance stagnates (e.g., no F1 increase or entropy decrease) for ``pat'' consecutive rounds. ``NDU+PTU'' signifies that both strategies are implemented concurrently. ``Recur'' and ``Once'' denote whether or not the above strategies are applied repeatedly whenever their conditions are met. The vertical line indicates the gap in the F1 score under two scenarios.}
\label{fig:update2}
\end{figure*}

\end{document}